\documentclass[10pt]{article}
\usepackage{makecell}
\usepackage[table]{xcolor}
\usepackage{algorithm}
\usepackage{algpseudocode}
\usepackage{float}
\usepackage[
    a4paper,
    left=1.3in,
    right=1.3in,
    top=1.15in,
    bottom=1.15in
]{geometry}
\usepackage[T1]{fontenc}
\usepackage{lmodern}
\usepackage{microtype}
\usepackage{natbib}
\usepackage{amsmath,amssymb,mathtools,amsthm}

\usepackage{graphicx}
\usepackage{subcaption}
\usepackage{booktabs}
\usepackage{tabularx}
\usepackage{makecell}

\usepackage{pifont}
\usepackage{xcolor}
\usepackage{xfrac}

\usepackage{setspace}

\usepackage{hyperref}
\usepackage[capitalize,noabbrev]{cleveref}
\hypersetup{
    colorlinks=true,
    linkcolor=blue,
    citecolor=blue,
    urlcolor=blue
}

\usepackage[textsize=tiny]{todonotes}

\newtheorem{theorem}{Theorem}[section]

\newtheorem{lemma}{Lemma}[section]
\theoremstyle{remark}
\newtheorem{remark}[theorem]{Remark}

\def\reals{\mathbb{R}}

\newcommand{\supp}{\textbf{Support}\{\D\}}
\newcommand{\sigmal}{\sigma_L}

\newcommand{\tsigma}{\Tilde{\sigma}}
\newcommand{\TD}{\Tilde{D}}
\newcommand{\alg}{$\mu^2$-SGD~}

\newcommand{\A}{\mathcal{A}}
\newcommand{\B}{\mathcal{B}}
\newcommand{\D}{\mathcal{D}}

\newcommand{\Sync}{\mathcal{S}}

\newcommand{\ba}{\mathbf{a}}
\newcommand{\bb}{\mathbf{b}}

\newcommand{\bd}{\mathbf{d}}

\newcommand{\bg}{\mathbf{g}}
\newcommand{\tbg}{\Tilde{\bg}}

\newcommand{\bv}{\mathbf{v}}
\newcommand{\tbv}{\Tilde{\bv}}

\newcommand{\bx}{\mathbf{x}}
\newcommand{\bbx}{\Bar{\bx}}
\newcommand{\bxs}{\bx^\star}
\newcommand{\by}{\mathbf{y}}
\newcommand{\bz}{\mathbf{z}}
\newcommand{\beps}{\boldsymbol{\epsilon}}

\newcommand{\Exp}[1]{\mathbb{E}#1} 
\newcommand{\ExpB}[1]{\mathbb{E}\left[#1\right]} 
\newcommand{\ExpC}[2]{\mathbb{E}_{#1}\left[#2\right]} 
\newcommand{\floor}[1]{\left\lfloor#1\right\rfloor} 
\newcommand{\ceil}[1]{\left\lceil#1\right\rceil} 
\newcommand{\norm}[1]{\left\|#1\right\|} 
\newcommand{\normsq}[1]{\left\|#1\right\|^2} 
\newcommand{\dotprod}[2]{\left\langle#1,#2\right\rangle} 
\newcommand{\f}[1]{f\!\left(#1\right)} 
\newcommand{\df}[1]{\nabla f\!\left(#1\right)} 
\newcommand{\oo}[1]{\mathcal{O}\left(#1\right)} 
\newcommand{\om}[1]{\Omega\left(#1\right)} 
\newcommand{\ot}[1]{\Theta\left(#1\right)} 
\newcommand{\mini}[1]{\min\left\{#1\right\}} 
\newcommand{\K}[1]{K_{\rm #1}} 
\newcommand{\up}[1]{^{(#1)}} 
\def\al#1\eal{\begin{align}#1\end{align}} 
\def\als#1\eals{\begin{align*}#1\end{align*}} 
\newcommand{\non}{\nonumber\\} 


\title{Local MixVR: Breaking the Communication-Sample Dependence in Distributed Learning}

\author{
\normalsize
\begin{tabular}[t]{c}
Tehila Dahan\thanks{\texttt{t.dahan@campus.technion.ac.il}} \\
Technion \\
Haifa, Israel
\end{tabular}
\hspace{1.8em}
\begin{tabular}[t]{c}
Bassel Hamoud \\
Technion \\
Haifa, Israel
\end{tabular}
\hspace{1.8em}
\begin{tabular}[t]{c}
Roie Reshef \\
Technion \\
Haifa, Israel
\end{tabular}
\\[2.9em]
\begin{tabular}[t]{c}
\normalsize
Martin Jaggi \\
\normalsize
EPFL \\
\normalsize
Lausanne, Switzerland
\end{tabular}
\hspace{1.8em}
\begin{tabular}[t]{c}
\normalsize
Kfir Y. Levy \\
\normalsize
Technion \\
\normalsize
Haifa, Israel
\end{tabular}
}
\date{}

\begin{document}
\maketitle
\begingroup
\renewcommand\thefootnote{}
\endgroup
\vspace{-12pt}
\begin{abstract}
Communication overhead is a crucial bottleneck in scalable distributed learning.
While existing methods aim to efficiently utilize data points, such as Local SGD, Minibatch SGD, and their accelerated variants, they still exhibit communication-round complexity that scales with the total number of samples $N$.
In this paper, we introduce Local MixVR, a distributed framework that integrates local updates with variance-reduction techniques to mitigate local noise.
We show that Local MixVR is the first distributed method to eliminate the dependence of communication complexity on $N$, achieving a complexity that scales only with the number of workers $M$.
In common regimes where $M<\oo{N^{1/4}}$, Local MixVR outperforms the state-of-the-art Minibatch Accelerated SGD baseline, bridging a long-standing gap in distributed optimization and establishing a new paradigm for communication-efficient training.
\end{abstract}

\section{Introduction}
\vspace{-5pt}
In the era of massive-scale machine learning \citep{verbraeken2020survey,zhao2023survey}, parallelizing training across multiple workers has become necessary.
However, communication overhead remains a fundamental bottleneck: as the number of workers increases, the cost of synchronizing updates can dominate overall training time.
This can be even more severe in poorly connected clusters, where limited bandwidth and high latency between nodes can significantly slow synchronization \citep{douillard2023diloco,jaghouar2024intellect}.
As a result, a central goal in distributed learning is to minimize the number of communication rounds while maintaining fast convergence.

A fundamental approach to parallel stochastic optimization is Minibatch SGD \citep{dekel2012optimal}, where $M$ workers compute stochastic gradients on minibatches of size $K$ and average them at each synchronization round to perform a global update.
This is equivalent to using an effective batch size of $MK$, which reduces the stochastic variance by a factor of $MK$.
Due to its simplicity and effectiveness, Minibatch SGD is widely used in distributed training \citep{brown2020language}.
However, increasing the effective batch size is beneficial only up to a point \citep{kaplan2020scaling}.
Once stochastic noise has been sufficiently reduced, further increases in batch size yield diminishing returns.
In many modern training regimes, the total batch size $MK$ enabled by many workers already exceeds the level needed for efficient learning.
Consequently, adding more workers may not substantially reduce the number of iterations to convergence, while it still increases synchronization and communication costs.
This can slow overall training despite the availability of additional compute.

This motivates distributed methods to use the available gradient information more effectively, rather than simply increasing the minibatch size.
Instead of using the extra gradients solely to reduce variance, one can also use them to make additional progress in optimization.
This is the idea behind Local SGD \citep{stichlocal, mangasarian1993backpropagation}: each worker performs $K$ local steps, then the server averages the models and sends the averaged model back to all workers.
This contrasts with Minibatch SGD, which performs only one global update per communication round.
More specifically, over $R$ rounds, Minibatch SGD performs $R$ optimization steps, whereas Local SGD performs $KR$ local updates per worker.
Thus, for the same number of processed gradients, Local SGD leverages this information to reduce stochastic gradient noise and make more progress along the optimization path.

\begin{table}[t]
\centering
\caption{\small Convergence rates and $R_{\min}$ for $\sigma=\oo{1}$, and $N=MKR$ total samples.}
\label{tab:main-comparison}
\small
\renewcommand{\arraystretch}{1.85}
\setlength{\tabcolsep}{5pt}
\begin{tabular}{p{4.3cm} p{7.2cm} c}
\toprule
\textbf{Method} & \textbf{Rate}~$[\oo{\cdot}]$ & $\boldsymbol{R_{\min}}~[\om{\cdot}]$ \\
\midrule
\makecell[l]{MiniBatch SGD \\ \citep{dekel2012optimal}}
& $\dfrac{1}{R}+\dfrac{\sigma}{\sqrt{MKR}}$
& $N^{1/2}$ \\
\makecell[l]{MiniBatch Accelerated SGD \\ \citep{lan2012optimal,dekel2012optimal}}
& $\dfrac{1}{R^2}+\dfrac{\sigma}{\sqrt{MKR}}$
&{\cellcolor{green!20} $N^{1/4}$} 
\\ \makecell[l]{Local SGD \\ \citep{khaled2020tighter}}
& $\dfrac{1}{KR}+\dfrac{\sigma^{2/3}}{K^{1/3}R^{2/3}} + \dfrac{\sigma}{\sqrt{MKR}}$
& $MN^{1/2}$ \\
\makecell[l]{SCAFFOLD \\ \citep{karimireddy2020scaffold}}
& $\dfrac{1}{R}+\dfrac{\sigma}{\sqrt{MKR}}$
& $N^{1/2}$ \\
\makecell[l]{Local Momentum \\ \citep{dahan2024slowcalsgd}}
& $\dfrac{1}{KR}+\dfrac{\sigma^{1/2}}{K^{1/4}R}+\dfrac{1}{K^{1/3}R^{4/3}}+\dfrac{1}{R^2}+\dfrac{\sigma}{\sqrt{MKR}}$
& $M^{1/3}N^{1/3}$ \\
\makecell[l]{Accelerated Outer SGD \\ \citep{khaled2025understanding}}
& $\dfrac{1}{KR^2}+\dfrac{\sigma^{2/3}}{M^{1/3}K^{1/3}R}+\dfrac{\sigma^{1/2}}{K^{1/4}R^{5/4}}+\dfrac{\sigma}{\sqrt{MKR}}$
& $M^{1/4}N^{1/4}$ \\
\midrule
\makecell[l]{Local SGD Lower bound: \\ \citep{glasgow2022sharp}}
& $\dfrac{1}{KR}+\dfrac{\sigma^{2/3}}{K^{1/3}R^{2/3}}+\dfrac{\sigma}{\sqrt{MKR}}$
& $MN^{1/2}$ \\
\makecell[l]{Local MixVR (this paper) \\ \cref{thm:final}}
& $\dfrac{1}{KR}+\dfrac{\sigma}{K^{\mathbf{1/2}}R}+\dfrac{\sigma}{\sqrt{MKR}}$
& \cellcolor{green!20} $\mathbf{M}$ \\
\bottomrule
\end{tabular}
\end{table}

This approach, however, introduces a new challenge: \emph{worker drift}.
Because each worker performs multiple updates with different samples, local models can diverge between synchronization rounds, especially when many local steps are taken.
Thus, although Local SGD seeks to extract more progress from each sample, its gains are limited by this accumulated drift.
Indeed, lower bounds \citep{glasgow2022sharp} show that Local SGD cannot require fewer communication rounds than Minibatch SGD to effectively utilize data points; see~\cref{tab:main-comparison}.

To assess the required number of communication rounds, one should ask, given a sample budget $N$, what is the minimum number of communication rounds such that additional optimization steps no longer improve the convergence rate?
The key trade-off is between the \emph{optimization error} and the \emph{statistical error}.
Since the statistical error is determined by the fixed sample budget $N$, there is a point beyond which further reducing the optimization error (by performing more optimization steps) does not improve the overall rate.
For example, Minibatch SGD has a convergence rate of order
\als
\underbrace{\dfrac{1}{R}}_{\text{optimization error}}+\underbrace{\dfrac{\sigma}{\sqrt{N}}}_{\text{statistical error}}
\eals
where $N:=MKR$ and $\sigma^2$ bounds the stochastic variance.
Thus, the statistical error dominates as long as
\vspace{-10pt}
\als
\frac{1}{R}\leq\oo{\frac{1}{\sqrt{N}}}
\eals
Consequently, once the number of communication rounds exceeds the critical threshold $R_{\min}=\om{\sqrt{N}}$, increasing $R$ no longer improves convergence.
Conversely, using fewer than $R_{\min}$ rounds worsens the rate, since the optimization error remains larger than the statistical error, which is the lower bound for first-order methods using $N$ samples \citep{nemirovskij1983problem, agarwal2009information}.
Thus, $R_{\min}$ is precisely the minimum number of communication rounds needed to preserve the optimal convergence rate.
While existing local methods attempt to improve this threshold (\cref{tab:main-comparison}), none currently outperform Minibatch Accelerated SGD (ASGD) \citep{lan2012optimal}.
Designing a method that \emph{provably surpasses Minibatch ASGD}, therefore, remains a long-standing challenge.
Furthermore, the communication complexity of existing approaches, including ASGD, \emph{scales with the sample size $N$}.
This dependence can be restrictive in common distributed-learning regimes, where datasets are often large, and communication is a primary bottleneck.
To overcome these limitations, we introduce Local MixVR, a distributed framework that integrates three distinct variance-reduction techniques:
\begin{itemize}
\item
\textbf{Local double-momentum:}
We adopt a recent double-momentum approach \citep{dahanstochastic} to substantially reduce local stochastic variance to keep workers closely aligned.
\item
\textbf{Hybrid local-minibatching:}
Near the end of each communication round, we replace the remaining local steps with a single minibatch update of the same size to reduce worker divergence rather than amplify it by progressing locally.
\item
\textbf{Drift correction mechanism:}
We introduce a correction mechanism that mitigates bias arising from the transition from local to global parameters at synchronization steps.
\end{itemize}
Each of these components can operate to suppress local divergence; when combined, they yield a unified, robust framework for distributed learning with local updates.

\paragraph{Our Contributions:}
\begin{itemize}
\item
\textbf{Surpassing the Accelerated Baseline.}
We show that in the common regimes where $M\leq\oo{N^{1/4}}$, Local MixVR is the first algorithm to improve upon the state-of-the-art Minibatch ASGD baseline (\cref{tab:main-comparison}).
This regime arises naturally in practical training; for example, ImageNet-1K \citep{deng2009imagenet} has about $1.28\times10^6$ training images and is often trained with $M=8$ workers \citep{goyal2017accurate}, giving ${R^{\rm ASGD}_{\min}}/{R^{\rm MixVR}_{\min}}\approx N^{1/4}/M\approx4.2$.
Similarly, FineWeb \citep{penedo2024fineweb} has roughly $15\times10^{12}$ tokens and was trained with $M=64$ workers, yielding ${R^{\rm ASGD}_{\min}}/{R^{\rm MixVR}_{\min}}\approx30.7$.
Thus, Local MixVR can use fewer communication rounds than Minibatch ASGD while maintaining optimal performance.
\item
\textbf{Sample-Independent Communication Complexity.}
We show that Local MixVR is the first algorithm whose required number of communication rounds is independent of the sample size $N$ and depends only on the number of workers $M$, thereby yielding a stronger guarantee for settings with massive datasets.
\end{itemize}

\paragraph{Related Work.}
Local-update methods, such as Local SGD and its variants \citep{koloskova2020unified,karimireddy2020mime,khaled2020tighter,stichlocal,gorbunov2021local,mishchenko2022proxskip, mishchenko2022proximal, pateltowards,cheng2023momentum,dahan2024slowcalsgd,yuan2020federated, reddiadaptive, mitra2021linear, zaccone2023communication} aim to improve performance by allowing workers to take several local steps before synchronizing.
A central challenge is worker drift: because workers compute updates using different stochastic batches, their optimization trajectories gradually diverge.
For smooth convex objectives, existing lower bounds show that vanilla Local SGD cannot improve over the Minibatch-SGD benchmark
\citep{glasgow2022sharp,woodworth2020local}.
By contrast, more general lower bounds indicate that local updates can improve the optimization error in certain regimes \citep{woodworth2021min}.
Unfortunately, as discussed in \citet{woodworth2021min}, their lower bound does not capture the natural setting in which the objective is an expectation over random smooth functions (as we consider, see~\cref{eq:Main}), and an appropriate lower bound for this setting is absent.
Consequently, much of the recent literature has focused on recovering, and in some cases improving upon, the rate of Minibatch SGD (e.g., \cref{tab:main-comparison}).
Yet, surpassing the performance of Minibatch ASGD \citep{nesterov2013introductory,lan2012optimal} is an open challenge that we address in this work.

One way to control drift is to use synchronized anchors that keep local trajectories close.
SCAFFOLD \citep{karimireddy2020scaffold} and Mime \citep{karimireddy2020mime} do this with global control variates, while momentum-based methods \citep{cheng2023momentum,dahan2024slowcalsgd,yuan2020federated,zaccone2023communication} follow a related idea by synchronizing momentum buffers across workers.
These buffers aggregate global information up to the last synchronization step, so the shared momentum serves as an anchor that reduces local stochasticity and keeps worker trajectories closer together.

Since worker drift is caused by stochastic noise, another natural approach is to reduce the variance of the local gradients by using minibatches \citep{dekel2012optimal}.
Hybrid methods, such as Post-Local SGD \citep{lin2018don}, first use Minibatch-SGD and then switch to local updates.
\citet{pateltowards} further study the trade-off between minibatching and local computation, proposing to compute minibatches of size $\oo{K}$ and then to perform $\oo{K}$ local steps.

More sophisticated variance-reduction methods build on momentum-based variance reduction (MVR), such as STORM \citep{cutkosky2019momentum}.
Unlike standard momentum \citep{polyak1964some}, which accumulates past gradients, STORM adds a correction term to reduce the bias from stale gradients, yielding stronger stochastic reduction.
Combined with local updates, this approach has been shown to improve stability and performance \citep{karimireddy2020mime,pateltowards,cheng2023momentum}.

\section{Settings}
We consider stochastic convex optimization problems of the form
\al\label{eq:LossExpected}
\f{\bx}:=\ExpC{\bz\sim\D}{\f{\bx;\bz}}
\eal
where $\D$ is an unknown distribution.
We study a distributed setup with $M$ workers that jointly minimize $f$ using stochastic first-order information over $R$ communication rounds.
In each round, every worker performs $K$ local steps using independently drawn samples.
After $R$ rounds, the algorithm outputs $\bx_{\mathrm{out}}\in\reals^d$, and performance is measured by the expected excess loss:
\als
\ExpB{\f{\bx_{\mathrm{out}}}}-\min_{\bx\in\reals^d}\f{\bx}.
\eals
At each local iteration $t$, worker $i$ samples $\bz_t\up{i}\sim\D$ and then computes a stochastic gradient $\df{\bx_t\up{i};\bz_t\up{i}}$ that serves as an unbiased estimator, i.e., $\ExpB{\df{\bx_t\up{i};\bz_t\up{i}}\mid\bx_t\up{i}}=\df{\bx_t\up{i}}$.

\textbf{Assumptions.}
We will make the following assumptions $\forall\bx,\by\in\reals^d$ and $\bz\in\supp$:

\textbf{Optimal Point}:~
There exists $\bxs\in\reals^d$ such that $\f{\bxs}\leq\f{\bx}$.
We will denote $D_1:=\norm{\bx_1-\bxs}$, the distance between the initial point and the optimal point.
\begin{flalign}\label{eq:bounded-variance}
&\text{\textbf{Bounded Variance}: There exists $\sigma>0$ such that }
\Exp{\normsq{\df{\bx;\bz}-\df{\bx}}}\leq\sigma^2&
\end{flalign}
\begin{flalign}\label{eq:Main}
&\text{\textbf{Smoothness}: There exists $L>0$ such that }
\norm{\df{\bx;\bz}-\df{\by;\bz}}\leq L\norm{\bx-\by}&
\end{flalign}
This implies that the expected loss $\f{\cdot}$ is $L$ smooth.
\begin{flalign}\label{eq:sigmal}
&\text{\textbf{Bounded Smoothness Variance}: The above assumption implies that there exists} \non
&\text{$\sigmal\in[0,L]$ such } \Exp{\normsq{(\df{\bx;\bz}-\df{\bx})-(\df{\by;\bz}-\df{\by})}}\leq\sigmal^2\normsq{\bx-\by}&
\end{flalign}

\textbf{Notation:}
$\df{\bx;\bz}$ relates to gradients with respect to $\bx$.
We use $\norm{\cdot}$ to denote the Euclidean norm.

\begin{algorithm}[t]
\caption{\alg}
\label{alg:mu2}
\begin{algorithmic}[1]
\small
\State \textbf{Input:} $\bbx_t,\bbx_{t-1},\bx_t,\bd_{t-1}$, stepsize $\eta_t$, momentum parameters $\beta_t,\gamma_t$
\State Sample $\bz_t\sim\D$, and compute $\bg_t\gets\df{\bbx_t;\bz_t}, \tbg_{t-1}\gets\df{\bbx_{t-1};\bz_t}$
\State $\bd_t\gets\bg_t+(1-\beta_t)\left(\bd_{t-1}-\tbg_{t-1}\right)$
\State $\bx_{t+1}\gets\bx_t-\eta_t\bd_t$
\State $\bbx_{t+1}\gets\gamma_t\bx_{t+1}+(1-\gamma_t)\bbx_t$
\State \textbf{return} $\bbx_{t+1},\bx_{t+1},\bd_t$
\end{algorithmic}
\end{algorithm}

\section{Local MixVR}

We introduce Local MixVR, a unified framework for mitigating worker drift in distributed learning with infrequent communication.
As discussed above, worker drift is the primary barrier to local-update methods; when workers perform multiple stochastic steps between synchronization rounds, their models can diverge due to sampling (stochastic) noise.
As a result, existing methods require synchronization schedules that couple communication cost with sample complexity.

Local MixVR is designed to break this dependence by combining several variance-reduction mechanisms to reduce the drift.
Together, these mechanisms stabilize local trajectories, reduce the stochastic error propagated at synchronization, and allow workers to communicate less often while preserving the optimal convergence rate.
Local MixVR consists of three main components:

\begin{enumerate}
\item[I.]
\textbf{Local Double-Momentum.}
To keep local models aligned and prevent divergence, each worker uses the \alg algorithm (\citet{dahanstochastic}, \cref{alg:mu2}) during its local updates.
These local updates combine two complementary momentum mechanisms:
\begin{itemize}
\item
Anytime Averaging \citep{cutkosky2019anytime}, which maintains the momentums $\{\bbx_t\up{i}\}_{i=1}^M$ of the local iterates $\{\bx_t\up{i}\}_{i=1}^M$.
Since these momentums move slowly away from the most recent synchronized global model, they stabilize each worker’s local trajectory and prevent the local parameters from drifting too quickly.
\item
The STORM estimator \citep{cutkosky2019momentum}, which reduces stochastic gradient noise by using information from the previous local step to correct the current gradient estimate.
This yields a cleaner descent direction at each worker, making the local steps more consistent and better aligned across workers.
\end{itemize}
\item[II.]
\textbf{Budget Mixing: Local Progress vs. Minibatch Averaging.}
Local MixVR uses a mixing parameter $\alpha\in(0,1)$ to divide the total budget of $K$ samples into two complementary roles: making progress in optimization and reducing stochastic noise before synchronization.
The parameter $\alpha$ controls this trade-off.
Specifically, Local MixVR allocates
$$\K{loc}=\floor{(1-\alpha)K}$$ 
samples to local optimization steps, and reserves
$$\K{avg}=\ceil{\alpha K}$$ 
samples for minibatch averaging.
Thus, for the same budget of $K$ samples, the local phase moves the optimization trajectory forward, while the averaging phase reduces stochastic noise before it is injected into the synchronized parameters.
\item[III.]
\textbf{Drift Correction Mechanism.} 
Finally, Local MixVR corrects the gradient estimator at synchronization to compensate for the drift accumulated during local updates.
At a synchronization step $t$, the workers first form the synchronized parameter $\bbx_t$ by averaging their local models $\{\bbx_t\up{i}\}_{i=1}^M$:
\al\label{eq:sync_parmas}
\bbx_t:=\frac{1}{M}\sum_{i=1}^M\bbx_t\up{i}
\eal
Then, each worker $i$ evaluates gradients at the same minibatches $\B_t\up{i}$ at two points: its previous local parameter $\bbx_{t-1}\up{i}$ and the synchronized parameter $\bbx_t$.
The difference
\als
\df{\bbx_t;\B_t\up{i}}-\df{\bbx_{t-1}\up{i};\B_t\up{i}}
\eals
is added as a correction term to the synchronized gradient estimator.
Intuitively, this term measures how much the worker's local gradient differs from the gradient of the synchronized model.
By subtracting this mismatch, Local MixVR corrects the bias introduced during the local phase, reduces the gap between local trajectories and the global update, and allows workers to restart from the synchronized model with less accumulated noise.
\end{enumerate}

\paragraph{I. Local Double-Momentum.}
The first layer of our framework is a local double-momentum mechanism, denoted by \alg.
Following the Anytime-SGD framework of \citet{cutkosky2019anytime}, one momentum is applied directly to the model parameters.
In particular, stochastic gradients are evaluated not at the current iterate $\bx_t$, but at the momentum-averaged parameters $\bbx_t$:
\al\label{eq:anytime}
\bx_{t+1}=\bx_t-\eta_t\df{\bbx_t;\bz_t}, \qquad \bbx_{t+1}=\gamma_t\bx_{t+1}+(1-\gamma_t)\bbx_t
\eal
where $\bz_t\sim\D$.
With the choice $\gamma_t=\tfrac{2}{t+2}$, the averaged sequence evolves slowly:
\al
\label{eq:anytine-cenral}
\norm{\bbx_{t+1}-\bbx_t}=\gamma_t\norm{\bx_{t+1}-\bbx_t}\leq\oo{\frac{\norm{\bx_{t+1}-\bbx_t}}{t}}
\eal
In the distributed setting, this property ensures that local model trajectories remain closely aligned across workers, effectively curbing the drift induced by local updates \citep{dahan2024slowcalsgd}.

The second momentum mechanism is designed to reduce stochastic variance by correcting the bias inherent in standard momentum \citep{polyak1964some}.
This bias arises because standard momentum averages past stochastic gradients, which were evaluated at previous, and therefore stale, iterates.
The STORM estimator \citep{cutkosky2019momentum} addresses this issue by introducing a recursive correction term:
\als
\bd_t=\underbrace{(1-\beta_t)\bd_{t-1}+\beta_t\df{\bbx_t;\bz_t}}_{\text{standard momentum}}+\underbrace{(1-\beta_t)(\df{\bbx_t;\bz_t}-\df{\bbx_{t-1};\bz_t}}_{\text{correction term}})
\eals
Equivalently, this update can be written in the compact form used throughout the paper:
\als
\bd_t=\df{\bbx_t;\bz_t}+(1-\beta_t)\left(\bd_{t-1}-\df{\bbx_{t-1};\bz_t}\right)
\eals
The correction term measures how the stochastic gradient changes when moving from $\bx_{t-1}$ to $\bx_t$, using the same sample $\bz_t$.
This adjustment compensates for the stale-gradient bias introduced by standard momentum.
As a result, $\bd_t$ is an (unconditionally) unbiased estimator of the true gradient $\df{\bx_t}$, that is, $\ExpB{\bd_t}=\df{\bx_t}$.
This unbiased estimator yields a stronger reduction in stochastic error than standard momentum \citep{cutkosky2019momentum}.

\textbf{\alg.}
As shown by \citet{dahanstochastic}, applying STORM to anytime-averaged iterates $\bbx_t$ can substantially reduce stochastic variance.
The resulting $\mu^2$-SGD update is
\als
&\bd_t=\beta_t\df{\bbx_t;\bz_t}+(1-\beta_t)\left(\bd_{t-1}-\df{\bbx_{t-1};\bz_t}\right) \\
&\bx_{t+1}=\bx_t-\eta_t\bd_t, \qquad \bbx_{t+1}=\gamma_t\bx_{t+1}+(1-\gamma_t)\bbx_t
\eals
Choosing $\beta_t=1/t$ allows \alg to use the full gradient history, yielding a maximal $1/t$ reduction in stochastic variance (in a similar way to minibatch SGD)
\als
\Exp{\normsq{\bd_t-\df{\bbx_t}}}\leq\oo{\frac{\tsigma^2}{t}}, \qquad \text{where}~\tsigma:=\ot{\sigma+\sigmal D_1}
\eals
Moreover, when averaging over $M$ workers, the error decreases further to $\oo{\frac{\tsigma^2}{Mt}}$.

\textbf{Local \alg.}
In Local MixVR, each worker $i$ applies \alg locally and computes its own \alg estimator during the local phase:
\als
&\bd_t\up{i}=\beta_t\df{\bbx_t\up{i};\bz_t\up{i}}+(1-\beta_t)\left(\bd_{t-1}\up{i}-\df{\bbx_{t-1}\up{i};\bz_t\up{i}}\right) \\
&\bx_{t+1}\up{i}=\bx_t\up{i}-\eta_t\bd_t\up{i}, \qquad \bbx_{t+1}\up{i}=\gamma_t\bx_{t+1}\up{i}+(1-\gamma_t)\bbx_t\up{i}
\eals
The variance reduction achieved by \alg can make the local updates more stable and better aligned across workers, thereby helping to mitigate worker drift.
However, moving from the minibatch setting of \citet{dahanstochastic} to a local-update regime introduces local bias.
This bias must therefore be controlled, as we show next with Local MixVR.

\begin{algorithm}[t]
\caption{Accumulation (minibatching) procedure and drift correction for worker $i$}
\label{alg:accumulation}
\begin{algorithmic}[1]
\small
\State \textbf{Input:} $\bbx_t,\bbx_{t-1}\up{i},\bd_{t-1}\up{i},\beta_t,\K{avg}$
\State $\bg_t\up{i}\gets0, \quad \tbg_{t-1}\up{i}\gets0$
\For{$k=1,\dots,\K{avg}$}
\Comment{accumulate gradients}
\State Sample $\bz_t\up{{i,k}}\sim\D$
\State Compute $\bg_t\up{i}\gets\bg_t\up{i}+\frac{1}{\K{avg}}\df{\bbx_t;\bz_{t,k}\up{{i}}}$
\State Compute $\tbg_{t-1}\up{i}\gets\tbg_{t-1}\up{i}+\frac{1}{\K{avg}}\df{\bbx_{t-1}\up{i};\bz_{t,k}\up{i}}$
\EndFor
\State $\bd_t\up{i}\gets\bg_t\up{i}+(1-\beta_t)\left(\bd_{t-1}\up{i}-\tbg_{t-1}\up{i}\right)$
\Comment{drift correction}
\State \textbf{return} $\bd_t\up{i}$
\end{algorithmic}
\end{algorithm}

\paragraph{II. Minibatch Averaging (Accumulation phase).}
After the local steps phase, Local MixVR synchronizes the model parameters (as in~\cref{eq:sync_parmas}).
However, STORM gradient estimators should not be synchronized in their current form.
Because each estimator is built recursively along a local trajectory, it not only estimates the gradient (as in Local SGD) but also carries the history of that worker's drift.
Averaging such estimators without correction would mix the accumulated local biases into the global estimator.
To correct this, each synchronization step $t$ is followed by a gradient-accumulation phase (\cref{alg:accumulation}), 
which refines the gradient estimators after synchronization and reduces variance caused by local drift.

For each worker $i$, let $\B_t\up{i}=\left\{\bz_{t,1}\up{i},\dots,\bz_{t,\K{avg}}\up{i}\right\}$ be a minibatch of size $\K{avg}$.
During the accumulation phase, worker $i$ evaluates two gradients using the same minibatch:
\als
\df{\bbx_t;\B_t\up{i}}:=&\frac{1}{\K{avg}}\sum_{k=1}^{\K{avg}}\df{\bbx_t;\bz_{t,k}\up{i}} , \qquad
\df{\bbx_{t-1}\up{i};\B_t\up{i}}:=&\frac{1}{\K{avg}}\sum_{k=1}^{\K{avg}}\df{\bbx_{t-1}\up{i};\bz_{t,k}\up{i}}
\eals
Using the same minibatch at both points yields a coupled gradient difference.
This coupling is important because the variance of the difference is controlled by the distance between the two iterates.
Moreover, averaging over $\K{avg}$ samples reduces this variance by a factor of $\K{avg}$.
Hence, under the random-function smoothness setting (\cref{eq:Main,eq:sigmal}), we have
\al\label{eq:smooth-drift}
\Exp{\normsq{\df{\bbx_t;\B_t\up{i}}-\df{\bbx_t}-\left(\df{\bbx_{t-1}\up{i};\B_t\up{i}}-\df{\bbx_{t-1}\up{i}}\right)}}\leq\frac{\sigmal^2}{\K{avg}}\Exp{\normsq{\bbx_t-\bbx_{t-1}\up{i}}}
\eal
We decompose this distance into two terms:
\als
\Exp{\normsq{\bbx_t-\bbx_{t-1}\up{i}}}\leq2\underbrace{\Exp{\normsq{\bbx_t-\bbx_t\up{i}}}}_{\textnormal{worker drift}}+2\underbrace{\Exp{\normsq{\bbx_t\up{i}-\bbx_{t-1}\up{i}}}}_{\textnormal{local iterates}}
\eals
The first term measures the discrepancy between the local and the synchronized model at time $t$.
The second term measures the distance that worker $i$ moved during its most recent local update.
Similar to \cref{eq:anytine-cenral}, the anytime mechanism reduces the distance between two successive local iterates:
\al\label{eq:local-anytime}
\frac{1}{M}\sum_{i=1}^M\Exp{\normsq{\bbx_t\up{i}-\bbx_{t-1}\up{i}}}\leq\frac{1}{M}\sum_{i=1}^M\gamma_{t-1}^2\Exp{\normsq{\bx_t\up{i}-\bbx_{t-1}\up{i}}}\leq\oo{\frac{\TD^2}{t^2}}
\eal
Here, $\TD^2$ is defined in \cref{app:anal} and depends on the initial distance $D_1$ and the stochastic error across workers.
By \cref{lem:dist}, it bounds the distance of any local iterate $\bbx_t\up{i}$ or $\bx_t\up{i}$ from $\bxs$, and therefore also bounds distances between two generated iterates (see e.g., \cref{eq:following-iter-anytime}).

Let $t_0$ be the last synchronization time before $t$.
Since synchronization initializes the local and global momentum sequences from the same point $\bbx_{t_0}$, the contribution of $\bbx_{t_0}$ cancels when subtracting the two unrolled recursions.
Thus, with $\gamma_t=\frac{2}{t+2}$, $\bbx_t\up{i}-\bbx_t=\frac{2}{(t+1)(t+2)}\sum_{k=t_0+2}^tk\left(\bx_k\up{i}-\bx_k\right)$.
Consequently, from \cref{lem:diff}
\als
\frac{1}{M}\sum_{i=1}^M\Exp{\normsq{\bbx_t\up{i}-\bbx_t}}\leq\oo{\frac{\K{loc}^2\TD^2}{t^2}}
\eals
Hence, before accumulation, the worker drift exceeds the local-iterate term by a factor of order $\K{loc}^2$.
The accumulation phase reduces
this contribution by the additional factor $\K{avg}$ in
\cref{eq:smooth-drift}.
Thus, with
$\K{avg}=\ceil{\alpha K}$, $\K{loc}=\floor{(1-\alpha)K}$, and $\alpha\in(0,1)$, the accumulated worker-drift scales as
\al
\label{eq:accumulated-drift}
\oo{\frac{\K{loc}^2}{\K{avg}}\frac{\TD^2}{t^2}}=\oo{\frac{K\TD^2}{t^2}}
\eal
for constant $\alpha$.
Within each synchronization round, the accumulated worker-drift error is of order $\oo{K\TD^2/t^2}$.
The local-iterates error also has the same scale: it is incurred over $\K{loc}$ local steps, each contributing $\approx \oo{\TD^2/t^2}$, yielding a total of $\approx \oo{K\TD^2/t^2}$.
Thus, after accumulation, both effects enter in the same order per round.
In this sense, the accumulation procedure keeps the variance due to worker drift comparable to the variance-reduction effect of the local iterates.

\paragraph{III. Drift Correction Mechanism.}
Prior to a synchronization step $t$, each worker's STORM estimator $\bd_{t-1}\up{i}$ is anchored to its local history, with an expectation $\ExpB{\bd_{t-1}\up{i}}=\df{\bbx_{t-1}\up{i}}$.
After synchronization, however, all workers move to the synchronized parameter $\bbx_t$.
Therefore, the estimator $\bd_{t}\up{i}$ should also be shifted from the stale local target $\df{\bbx_{t-1}\up{i}}$ to the new global target $\df{\bbx_{t}}$.
In other words, we want the corrected estimator to satisfy $\ExpB{\bd_{t}\up{i}}=\df{\bbx_{t}}$.
We achieve this by using the gradients from the accumulation phase as a bridge.
Specifically, we use $\df{\bbx_{t-1}\up{i};\B_t\up{i}}$ to subtract the error inherited from the local path and $\df{\bbx_t;\B_t\up{i}}$ to re-center the estimator toward the new synchronized parameters.
This synchronization correction is defined as:
\als
\Tilde{\bd}_t\up{i}=(1-\beta_t)\bd_{t-1}\up{i}+\beta_t\df{\bbx_t;\B_t\up{i}}+(1-\beta_t)\underbrace{(\df{\bbx_t;\B_t\up{i}}-\df{\bbx_{t-1}\up{i};\B_t\up{i}}}_{\textnormal{drift correction}})
\eals
This correction removes the bias induced by the worker's local drift and shifts the estimator toward the synchronized global parameter.
Consequently, $\ExpB{\Tilde{\bd}_t\up{i}}=\df{\bbx_t}$, and after synchronizing the corrected estimators across all $M$ workers, the local corrected momentum $\bd_t\up{i}=\bd_t=\frac{1}{M}\sum_{i=1}^M\Tilde{\bd}_t\up{i}$ preserves the same centering property $\ExpB{\bd_t\up{i}}=\df{\bbx_t}$.
Moreover, as discussed above, because the correction uses minibatching at the accumulated phase, the drift error remains controlled.

\begin{algorithm}[t]
\caption{Local MixVR}
\label{alg:local-mu2}
\begin{algorithmic}[1]
\small
\State \textbf{Input:} $\bbx_0\in\reals^d,\{\eta_t,\beta_t\in(0,1],\gamma_t\in(0,1]\}_t$, workers $M$, rounds $R$, steps $K$, $\alpha\in[0,1]$.
Let $A$ denote the accumulation and drift-correction procedure from \cref{alg:accumulation}, and \alg denote the \alg update rule in \cref{alg:mu2}.
\State \textbf{Initialize:} $\bbx_1\up{i}=\bx_1\up{i}=\bbx_0, \ \bd_0\up{i}\gets\mathbf{0}, \ \forall i\in[M]$, and set $\K{loc}\gets\floor{(1-\alpha)K}, \ \K{avg}\gets\ceil{\alpha K}$, $\beta_1\gets1, \ t\gets1$.
\For{$r=1,\dots,R$}

\For{each worker $i\in[M]$ 
\textbf{in parallel}}
\State $\bbx_{t-1}\up{i}\gets \bbx_{t-1}$
\For{$k=1,\dots,\K{loc}$}
\Comment{local steps}
\State $\left(\bbx_{t+1}\up{i},\bx_{t+1}\up{i},\bd_t\up{i}\right)\gets\textsc{\alg}\left(\bbx_t\up{i},\bbx_{t-1}\up{i},\bx_t\up{i},\bd_{t-1}\up{i},\eta_t,\beta_t,\gamma_t\right)$, $\qquad t\gets t+1$
\EndFor
\EndFor
\vspace{5pt}
\State $\bbx_t\gets\frac{1}{M}\sum_{i=1}^M\bbx_t\up{i}, \quad \bx_t\gets\frac{1}{M}\sum_{i=1}^M\bx_t\up{i}$
\Comment{synchronize model parameters}
\vspace{5pt}
\For{each worker $i\in[M]$ \textbf{in parallel}}
\State $\Tilde{\bd}_t\up{i}\gets\A\left(\bbx_t,\bbx_{t-1}\up{i},\bd_{t-1}\up{i},\K{avg}\right)$
\Comment{accumulation \& drift correction phase}
\EndFor
\vspace{5pt}
\State $\bd_t\gets\frac{1}{M}\sum_{i=1}^M\Tilde{\bd}_t\up{i}$
\Comment{synchronize gradient estimators}
\vspace{5pt}
\For{each worker $i\in[M]$ \textbf{in parallel}}
\State $\bd_t\up{i}\gets\bd_t, \quad \bx_{t+1}\up{i}\gets\bx_t-\eta_t\bd_t\up{i}, \quad \bbx_{t+1}\up{i}\gets\gamma_t\bx_{t+1}\up{i}+(1-\gamma_t)\bbx_t, \quad t \gets t +1$
\EndFor
\EndFor
\State \textbf{return} $\bbx_t$
\end{algorithmic}
\end{algorithm}

\paragraph{Local MixVR.}
The full procedure is summarized in \cref{alg:local-mu2}.
We now turn to the theoretical analysis of Local MixVR and present its main guarantees.

\begin{lemma}[Local Stochastic Noise]\label{lem:error-bound}
Let $f:\reals^d\to\reals$ be a convex $L$-smooth function with global minimizer $\bxs$ and suppose that \cref{eq:bounded-variance,eq:Main} hold.
Then, applying \cref{alg:local-mu2} with $\alpha=\frac{1}{2},\beta_t=\frac{1}{t},\gamma_t=\frac{2}{t+2},\eta_t=t\cdot\eta$, where $\eta\leq\mini{\frac{1}{2L},\frac{1}{8\sigmal t\sqrt{2\K{loc}+\frac{3t}{M}}}}$, gives
\als
\frac{1}{M}\sum_{i=1}^Mt^2\Exp{\normsq{\beps_t\up{i}}}\leq\underbrace{2\K{loc}\tsigma^2}_{{\textnormal{local error}}}+\underbrace{\frac{3t\tsigma^2}{M}}_{{\textnormal{shared error}}}
\eals
where $\tsigma^2=2\sigma^2+32\sigmal^2D_1^2$.
\end{lemma}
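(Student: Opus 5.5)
We will track the weighted error $E_t:=t^2\,\frac1M\sum_i\Exp{\normsq{\beps_t\up{i}}}$, with $\beps_t\up{i}:=\bd_t\up{i}-\df{\bbx_t\up{i}}$. We will show by induction on $t$ that $E_t\le 2\K{loc}\tsigma^2+3t\tsigma^2/M$.

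\textbf{Recursion for a local step.} With $\beta_t=1/t$, the STORM recursion rewrites as
\[
t\beps_t\up{i}=(t-1)\beps_{t-1}\up{i}+\bb_t\up{i},
\]
where $\bb_t\up{i}=\bigl(\df{\bbx_t\up{i};\bz_t\up{i}}-\df{\bbx_t\up{i}}\bigr)+(t-1)Z_t\up{i}$. Here
\[
Z_t\up{i}=\bigl(\df{\bbx_t\up{i};\bz_t\up{i}}-\df{\bbx_t\up{i}}\bigr)-\bigl(\df{\bbx_{t-1}\up{i};\bz_t\up{i}}-\df{\bbx_{t-1}\up{i}}\bigr).
\]
Conditioned on the past, $\bb_t\up{i}$ has mean zero, so the cross term vanishes. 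Using \cref{eq:bounded-variance} and \cref{eq:sigmal},
\[
\Exp{\normsq{\bb_t\up{i}}}\le 2\sigma^2+2\sigmal^2(t-1)^2\Exp{\normsq{\bbx_t\up{i}-\bbx_{t-1}\up{i}}}.
\]
By the anytime property \cref{eq:local-anytime}, $(t-1)^2\normsq{\bbx_t\up{i}-\bbx_{t-1}\up{i}}\lesssim\eta^2t^2\normsq{\bd_{t-1}\up{i}}$, since $\bx_t-\bbx_{t-1}$ is a telescoped sum of weighted steps. We then split $\normsq{\bd}\le 2\normsq{\beps}+2\normsq{\df{\bbx}}$. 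The gradient part is bounded by the distance lemma (\cref{lem:dist}) via $\normsq{\df{\bbx}}\le L^2\cdot$ distance, which produces the $\sigmal^2D_1^2$ contribution in $\tsigma^2$. The error part is fed back into the recursion with a small coefficient $O(\sigmal^2\eta^2t^2)$.

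\textbf{Synchronization steps.} At synchronization we average $\tilde\bd_t\up{i}$. Each averaged term is unbiased and independent across workers, so the fresh noise enters as $\frac1M$ times the average variance. The inherited errors $(t-1)\beps_{t-1}\up{i}$ are averaged too, and by Jensen the averaged inherited error is bounded by their mean. The drift-correction noise is controlled by \cref{eq:smooth-drift}: it carries the factor $1/\K{avg}$ multiplying $\normsq{\bbx_t-\bbx_{t-1}\up{i}}$. Combining the worker-drift bound (\cref{lem:diff}) with \cref{eq:accumulated-drift}, this contributes $O(\sigmal^2\eta^2t^2\K{loc})$ times the error and distance terms.

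\textbf{Unrolling within a round.} Right after a synchronization all workers share $\bd_t$, so the shared error accumulates at rate $\tsigma^2/M$ per step across rounds; this gives the $3t\tsigma^2/M$ term. Within a round of $\K{loc}$ local steps, the per-worker fresh noise adds up to at most $\K{loc}\cdot 2\sigma^2$ without the $1/M$ reduction. Because averaging at synchronization divides this local part by $M$ and folds it into the shared term, the local part never exceeds $2\K{loc}\tsigma^2$. Finally, we close the induction using the step-size bound $\eta\le 1/(8\sigmal t\sqrt{2\K{loc}+3t/M})$. It gives
\[
64\sigmal^2\eta^2t^2\bigl(2\K{loc}+3t/M\bigr)\le1,
\]
so the feedback terms (bounded using the induction hypothesis $E_{s}\le 2\K{loc}\tsigma^2+3s\tsigma^2/M$) contribute at most a constant fraction. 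The constants $2$ and $3$ then absorb them.

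\textbf{Main obstacle.} The hard step is the bookkeeping that separates the local (non-averaged) error from the shared error across synchronization boundaries. Specifically, we must show that the local part resets to a $1/M$-reduced contribution at each synchronization instead of growing with $t$. We will handle this by decomposing $\beps_t\up{i}=\bar\beps_t+(\beps_t\up{i}-\bar\beps_t)$, with $\bar\beps_t$ the worker average. The average obeys a recursion with variance $\tsigma^2/M$, while the deviation is zero at each synchronization and grows over at most $\K{loc}$ steps. These two parts are orthogonal in expectation, which yields the stated sum.
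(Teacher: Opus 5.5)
Your skeleton matches the paper's. It has a martingale unrolling with a shared/local split as in \cref{lem:unroll-error}. It uses the increment bound $2\sigma^2+2\sigmal^2(\tau-1)^2\Exp{\normsq{\bbx_\tau\up{i}-\bbx_{\tau-1}\up{i}}}$, and \cref{lem:diff} with the $1/\K{avg}$ factor at synchronization. It closes with $64\sigmal^2\eta^2t^2(2\K{loc}+3t/M)\leq1$, and your induction is a fine substitute for \cref{lem:Generic}. However, the local-step estimate $(t-1)^2\normsq{\bbx_t\up{i}-\bbx_{t-1}\up{i}}\lesssim\eta^2t^2\normsq{\bd_{t-1}\up{i}}$ is false.

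Since $\bbx_t-\bbx_{t-1}=\gamma_{t-1}(\bx_t-\bbx_{t-1})$, the left side equals $\frac{4(t-1)^2}{(t+1)^2}\normsq{\bx_t-\bbx_{t-1}}$. The quantity $\mathbf{u}_t:=\bx_t-\bbx_{t-1}$ satisfies $\mathbf{u}_t=\frac{t-2}{t}\mathbf{u}_{t-1}-\eta(t-1)\bd_{t-1}$. Unrolling it along a single trajectory (e.g.\ $M=1$) gives
\[
\bx_t-\bbx_{t-1}=-\frac{\eta}{t(t-1)}\sum_{s=1}^{t-1}(s+1)s^2\bd_s .
\]
When the $\bd_s$ are aligned, this has norm of order $\eta t^2\norm{\bd}$, a factor $t$ larger than your bound. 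The telescoped sum runs over the whole history, not just $\bd_{t-1}$.

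Even granting your bound, the gradient part does not produce $\tsigma^2$. It gives $\sigmal^2\eta^2t^2\normsq{\df{\bbx_{t-1}}}\leq\sigmal^2\eta^2t^2L^2\TD^2$. Under the lemma's hypotheses ($\eta\leq\frac{1}{2L}$ and the $\sigmal$ condition) this is only $\oo{L^2\TD^2/(2\K{loc}+3t/M)}$. So you would get $L^2D_1^2$ rather than $\sigmal^2D_1^2$ inside $\tsigma^2$.

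The fix is what the paper does, and what \cref{eq:local-anytime} actually asserts: never pass through $\bd$. Bound $\normsq{\bx_\tau\up{i}-\bbx_{\tau-1}\up{i}}\leq2\normsq{\bx_\tau\up{i}-\bxs}+2\normsq{\bbx_{\tau-1}\up{i}-\bxs}$ and apply \cref{lem:dist}. This makes $(\tau-1)^2\sigmal^2\cdot\frac1M\sum_i\Exp{\normsq{\bbx_\tau\up{i}-\bbx_{\tau-1}\up{i}}}=\oo{\sigmal^2\TD_\tau^2}$, with no extra $\eta^2\tau^2$ factor. The gradient contribution is absorbed inside \cref{lem:dist} through convexity, \cref{lem:anytime} and $\eta\leq\frac{1}{2L}$; this is where that hypothesis is needed. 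The error feedback then enters only via $\TD_t^2=2D_1^2+\frac{4\eta^2t}{M}\sum_i\sum_{k\leq t}k^2\Exp{\normsq{\beps_k\up{i}}}$. That produces the term $32\sigmal^2\eta^2t\sum_{k\leq t}E_k$, which is what your closing condition controls.

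There is a second problem at synchronization. You bound the averaged inherited error by Jensen, $\normsq{\frac{1}{M}\sum_j(t-1)\beps_{t-1}\up{j}}\leq\frac{1}{M}\sum_j\normsq{(t-1)\beps_{t-1}\up{j}}$. This discards exactly the $1/M$ reduction of the within-round noise. Each worker's local part from the current round would survive synchronization undiminished. Over the rounds, the shared term $3t\tsigma^2/M$ would then degrade to order $t\sigma^2$.

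What is needed instead is that, conditioned on the past, the increments of different workers are independent and mean zero, so all cross terms vanish. Your average/deviation decomposition in the last paragraph supplies this, and it is equivalent to the paper's \cref{lem:unroll-error}. The Jensen step should be replaced by that argument, not combined with it.
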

\begin{remark}
\Cref{lem:error-bound} highlights a key temporal divide in the stochastic error of Local MixVR.
At synchronization, each worker effectively incorporates information from all workers, so stochasticity before the most recent round is globalized, producing the shared-error term and recovering the optimal $1/M$ minibatch variance reduction.
Only the “raw” stochasticity from the current local window of length $\oo{\K{loc}}$ avoids this reduction.
Thus, as $t$ grows, local noise becomes negligible: nearly all optimization history is globally averaged, while only the most recent $\oo{\K{loc}}$ steps remain local.
\end{remark}

\begin{theorem}\label{thm:final}
Let $f:\reals^d\to\reals$ be a convex $L$-smooth function with global minimizer $\bxs$ and suppose that \cref{eq:bounded-variance,eq:Main} hold.
Then, applying \cref{alg:local-mu2} with $\alpha=\frac{1}{2},\beta_t=\frac{1}{t},\gamma_t=\frac{2}{t+2},\eta_t=t\cdot\eta$, where $\eta=\mini{\frac{1}{8LT},\frac{D_1}{\sqrt{6}T\sqrt{\sigma^2+16\sigmal^2D_1^2}\sqrt{2\K{loc}+\frac{3T}{M}}}}$ gives
\als
\Delta_T\leq\oo{\frac{LD_1^2}{KR}+\frac{\tsigma D_1}{K^{1/2}R}+\frac{\tsigma D_1}{\sqrt{MKR}}}
\eals
Where $\tsigma=\ot{\sigma+\sigmal D_1}$ and $\Delta_T:=\ExpB{\f{\bbx_T}}-\f{\bxs}$, with $\bbx_T:=\frac{1}{M}\sum_{i=1}^M\bbx_T\up{i}$.
\end{theorem}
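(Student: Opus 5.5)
We follow the Anytime-SGD analysis of \citet{cutkosky2019anytime}, applied to the \emph{virtual averaged sequences} $\bx_t:=\frac{1}{M}\sum_i\bx_t\up{i}$ and $\bbx_t:=\frac{1}{M}\sum_i\bbx_t\up{i}$. Since every update in \cref{alg:local-mu2} (local $\mu^2$ steps, synchronization, post-sync step) is linear in the workers' states, these averages obey
\als
\bx_{t+1}=\bx_t-\eta_t\bar\bd_t,\qquad \bbx_{t+1}=\gamma_t\bx_{t+1}+(1-\gamma_t)\bbx_t,\qquad \bar\bd_t:=\frac{1}{M}\sum_{i=1}^M\bd_t\up{i}.
\eals
With $\gamma_t=2/(t+2)$ and weights $\alpha_t=t$, the anytime identity gives $\bbx_T=\sum_{t\le T}\alpha_t\bx_t/\sum_{t\le T}\alpha_t$. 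Write $\bar\bd_t=\frac1M\sum_i\df{\bbx_t\up{i}}+\bar\beps_t$, where $\beps_t\up{i}:=\bd_t\up{i}-\df{\bbx_t\up{i}}$.

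The standard online-to-batch argument, using convexity and $\alpha_t(\bbx_t-\bx_t)$ being a telescoping-friendly combination, yields
\als
\Big(\sum_{t\le T}\alpha_t\Big)\Delta_T\le \sum_t\alpha_t\dotprod{\df{\bbx_t}}{\bx_t-\bxs}-(\text{Bregman terms}).
\eals
We replace $\df{\bbx_t}$ by $\frac1M\sum_i\df{\bbx_t\up{i}}$. This costs $L$ times the consensus error $\frac1M\sum_i\norm{\bbx_t\up{i}-\bbx_t}$, which is controlled by \cref{lem:diff} together with the smoothness-Bregman inequality; as in \citet{dahan2024slowcalsgd}, the Bregman terms from the anytime analysis absorb this once $\eta\le 1/(8LT)$.

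The remaining SGD-style regret on $\bx_t$ with step $\eta_t=t\eta$ gives
\als
\sum_t\alpha_t\dotprod{\bar\bd_t}{\bx_t-\bxs}\le \frac{D_1^2}{2\eta}+\frac{\eta}{2}\sum_t\alpha_t^2\normsq{\bar\bd_t}.
\eals
The gradient-norm part of the right-hand side is again cancelled by the Bregman terms using $\eta\le1/(8LT)$. The noise part and the cross term $\sum_t\alpha_t\dotprod{\bar\beps_t}{\bx_t-\bxs}$ are bounded by Young's inequality:
\als
\sum_t\alpha_t\dotprod{\bar\beps_t}{\bx_t-\bxs}\le \frac{\lambda}{2}\sum_t\normsq{\bx_t-\bxs}+\frac{1}{2\lambda}\sum_t t^2\normsq{\bar\beps_t}.
\eals
We use $\normsq{\bar\beps_t}\le\frac1M\sum_i\normsq{\beps_t\up{i}}$ and then \cref{lem:error-bound}, whose stepsize condition holds for the chosen $\eta$ up to constants. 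Distances are bounded by $\TD^2=\oo{D_1^2}$ via \cref{lem:dist}. The noise sum then becomes
\als
\sum_{t\le T}\Big(2\K{loc}\tsigma^2+\frac{3t\tsigma^2}{M}\Big)\le T\tsigma^2\Big(2\K{loc}+\frac{3T}{M}\Big).
\eals

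Combining, and using $\sum_t\alpha_t=\Theta(T^2)$,
\als
\Delta_T\lesssim \frac{1}{T^2}\Big(\frac{D_1^2}{\eta}+\eta T^2\tsigma^2\big(2\K{loc}+3T/M\big)\Big).
\eals
Plugging in $\eta$ balances the two terms and gives
\als
\Delta_T\lesssim\frac{LD_1^2}{T}+\frac{D_1\tsigma\sqrt{\K{loc}+T/M}}{T}.
\eals
With $T=\Theta(KR)$ and $\K{loc}=\Theta(K)$ (since $\alpha=1/2$), the first term is $\frac{LD_1^2}{KR}$. The second splits into $\frac{\tsigma D_1\sqrt K}{KR}=\frac{\tsigma D_1}{K^{1/2}R}$ and $\frac{\tsigma D_1}{\sqrt{MKR}}$. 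Note that $T$ counts total iterations including the accumulation step; accumulation samples cost $\K{avg}$ but advance $t$ by one per round, so we take $T=\K{loc}R+R=\Theta(KR)$.

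The main obstacle is handling the consensus mismatch $\df{\bbx_t}$ versus $\frac1M\sum_i\df{\bbx_t\up{i}}$ without picking up an extra $K^2$-type term. We would first try bounding it as $L\,\frac1M\sum_i\normsq{\bbx_t\up{i}-\bbx_t}\le\oo{L\K{loc}^2\TD^2/t^2}$ via \cref{lem:diff}. Summed with weight $\alpha_t$ and divided by $T^2$, this gives roughly $LK^2D_1^2\log T/T^2$, which is $\oo{LD_1^2/(KR)}$ only up to a log factor when $R\gtrsim K$. If that fails, we would instead use the Bregman/co-coercivity term to absorb it, as in the SLowcal-SGD analysis.
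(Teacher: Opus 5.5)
There is a genuine gap, and it sits exactly at the step you flag as the main obstacle.

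\textbf{The consensus bias.} Because you run the online-to-batch argument on the global virtual sequence, you obtain $\dotprod{\df{\bbx_t}}{\bx_t-\bxs}$. At non-synchronization steps, however, $\bar\bd_t$ is centered at $\frac1M\sum_i\df{\bbx_t\up{i}}$. The resulting bias $\sum_t t\dotprod{\df{\bbx_t}-\frac1M\sum_i\df{\bbx_t\up{i}}}{\bx_t-\bxs}$ is \emph{linear} in the worker disagreement and is multiplied by a distance of order $\TD$. With \cref{lem:diff}, the best you get (Cauchy--Schwarz, or Young with any $\lambda$) is $\sum_t t\cdot\frac{2L\K{loc}\TD}{t}\cdot\TD=O(L\K{loc}\TD^2T)$. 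After dividing by $T^2$ this is $O(LD_1^2/R)$, a factor $K$ worse than the claimed $LD_1^2/(KR)$.

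\textbf{Your fallbacks.} Neither repairs this.
\begin{itemize}
\item The Bregman terms of the global anytime analysis, $\alpha_{1:t-1}D_f(\bbx_{t-1},\bbx_t)+\alpha_tD_f(\bxs,\bbx_t)$, involve only global points. They cannot absorb a quantity that measures disagreement across workers.
\item The SLowcal-SGD route bounds the disagreement through step sizes and noise. The rate it yields (Local Momentum row of \cref{tab:main-comparison}) carries extra terms such as $\sigma^{1/2}/(K^{1/4}R)$ and $1/(K^{1/3}R^{4/3})$, which the theorem does not have.
\item Option (b) presupposes a \emph{squared} consensus error, which this linear term does not produce. Even granted, it gives $LD_1^2\log T/R^2$. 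That is $O(LD_1^2/(KR))$ only for $R\gtrsim K\log T$, whereas the theorem is stated for all $K,R$.
\end{itemize}
Your ``gradient-norm part'' also leaks consensus error. The quantity $\normsq{\frac1M\sum_i\df{\bbx_t\up{i}}}$ is controlled by the local gaps $f(\bbx_t\up{i})-f(\bxs)$, not by the global gap that your Bregman terms see. Separately, at synchronization steps $\beps_t\up{i}$ must be centered at $\df{\bbx_t}$, not at $\df{\bbx_t\up{i}}$, for \cref{lem:error-bound} to apply.

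\textbf{The missing idea.} The paper never compares $\df{\bbx_t}$ with an average of local gradients. It defines a worker-local gap $\Delta_t\up{i}$, taken at $\bbx_t\up{i}$ off synchronization and at $\bbx_t$ on synchronization steps. It then runs the anytime recursion separately for each worker.
\begin{itemize}
\item Between synchronizations, each $\bbx_t\up{i}$ is an exact anytime average of that worker's own $\bx_t\up{i}$.
\item Jensen's inequality, $f(\bbx_{t-1})\le\frac1M\sum_if(\bbx_{t-1}\up{i})$, is invoked only at synchronization steps (\cref{lem:average,lem:anytime}).
\item The resulting right-hand side pairs $\df{\bbx_\tau\up{i}}$ with $\bx_\tau\up{i}-\bxs$. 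This exactly matches the per-worker regret of \cref{lem:Bound}.
\item Gradient norms are then absorbed via $\normsq{\df{\bbx_t\up{i}}}\le2L\Delta_t\up{i}$ and \cref{lem:Generic} under $\eta\le1/(8LT)$ (\cref{lem:reg-mu}).
\end{itemize}
No consensus bias appears at all. Worker drift enters only through the variance, at the drift-correction step, where the $\K{loc}^2$ of \cref{lem:diff} is divided by $\K{avg}$ (\cref{lem:error-bound}). From there, your final steps coincide with the paper's: plugging in \cref{lem:error-bound}, balancing $\eta$, and using $T=R(\K{loc}+1)=\Theta(KR)$.
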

\begin{remark}
\Cref{thm:final} establishes that Local MixVR achieves the optimal statistical rate while reducing the required communication rounds compared to existing methods (\cref{tab:main-comparison}).
Specifically, in the regime $M\leq\oo{N^{1/4}}$, our bound improves upon the state-of-the-art Minibatch ASGD baseline.
Notably, the communication complexity derived from \cref{thm:final} is entirely independent of the local sample size $N$, scaling only with the number of workers $M$.
This sample-independent guarantee ensures that Local MixVR remains communication-efficient as datasets grow, making it well-suited to common settings with massive datasets.
\end{remark}

\section{Experiments}
We evaluate how the number of communication rounds $R$ affects test accuracy on MNIST \citep{lecun2010mnist} and CIFAR-10 \citep{krizhevsky2014cifar}, using 4 and 8 workers, respectively.
All experiments were implemented in PyTorch and run on NVIDIA B200 GPUs.
Results are averaged over 3 seeds.
We use a two-layer convolutional network for MNIST and a ResNet-18 \citep{he2016deep} architecture for CIFAR-10.
Further experimental details are provided in the appendix (\cref{app:exp}).
\cref{fig:exp} compares Local MixVR with Local SGD, Local Momentum, Minibatch SGD, and Minibatch ASGD.
On both datasets, Local MixVR outperforms these baselines over a range of communication-round counts.
These results indicate that Local MixVR is robust to reduced communication and can achieve strong performance even with a small number of synchronization rounds.

\begin{figure}[t]
\centering
\includegraphics[width=\textwidth]{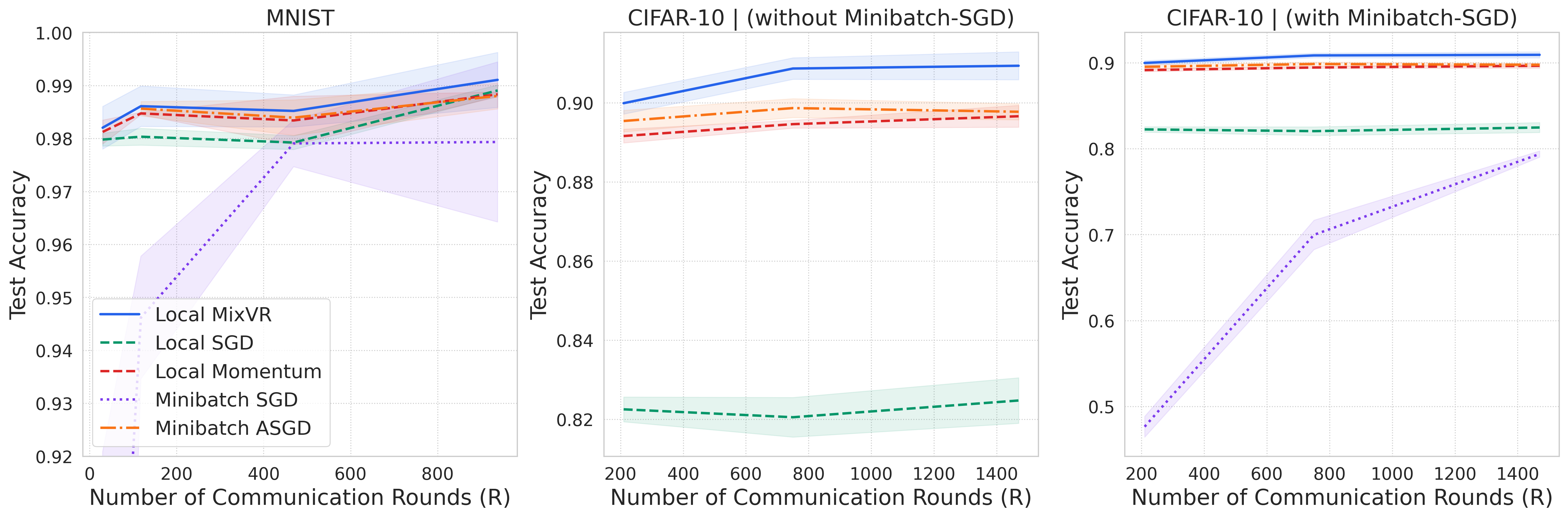}
\caption{Test accuracy over the number of communication rounds \(R\) for MNIST with 4 workers and CIFAR-10 with 8 workers.}
\label{fig:exp}
\end{figure}

\section{Conclusions and Future Work}
In this work, we show for the first time that the Minibatch ASGD baseline can be surpassed in common regimes, achieving high performance with reduced communication that is independent of the total number of samples $N$.
By appropriately integrating several variance-reduction techniques, our framework addresses complementary sources of error and effectively mitigates bias accumulated from local updates.
This combination is conceptually natural, and we believe this design principle will prove useful beyond the specific algorithm studied here.
For future work, it would be interesting to derive lower bounds for the distributed setting discussed here under the natural assumption of expectation over smooth functions.
Furthermore, it would be valuable to investigate accelerated variants of our framework, as acceleration has been shown to be a practical and effective approach to enhancing distributed learning with local steps \citep{douillard2023diloco}.

\bibliographystyle{plainnat}
\bibliography{bib_.bib}

\clearpage
\appendix
\section{Implications of the Smoothness Assumption}
We first show that~\cref{eq:Main} implies that~\cref{eq:sigmal} holds for some $\sigmal\in[0,L]$.
\als
&\Exp{\normsq{\left(\df{\bx;\bz}-\df{\bx}\right)-\left(\df{\by;\bz}-\df{\by}\right)}} \\
=&\Exp{\normsq{\df{\bx;\bz}-\df{\by;\bz}}}-\normsq{\df{\bx}-\df{\by}}\leq L^2\normsq{\bx-\by}
\eals
Here, we used the identity $\ExpB{\df{\bx;\bz}-\df{\by;\bz}}=\df{\bx}-\df{\by}$,
together with the identity $\Exp{\normsq{X-\ExpB{X}}}=\Exp{\normsq{X}}-\normsq{\ExpB{X}}$, and finally~\cref{eq:Main}.
Therefore, we conclude that $\sigma_L\in[0,L]$.

\begin{lemma}\label{lem:smooth-functions}
Let $f:\reals^d\to\reals$ be an $L$-smooth function with global minimizer $\bxs$.
Then, for any $\bx\in\reals^d$,
\als
\normsq{\df{\bx}}\leq2L\left(\f{\bx}-\f{\bxs}\right)
\eals
\end{lemma}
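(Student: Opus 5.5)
The statement to prove is the standard fact that for an $L$-smooth $f$ with global minimizer $\bxs$, $\normsq{\df{\bx}}\leq 2L(\f{\bx}-\f{\bxs})$. Convexity is not needed; only the descent lemma and global optimality of $\bxs$ are used. The plan is to take one gradient step from $\bx$ with stepsize $1/L$ and compare the resulting value against $\f{\bxs}$.

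First, we recall the quadratic upper bound implied by $L$-smoothness: for all $\bx,\by$,
\als
\f{\by}\leq\f{\bx}+\dotprod{\df{\bx}}{\by-\bx}+\frac{L}{2}\normsq{\by-\bx}.
\eals
This follows from $\f{\by}-\f{\bx}=\int_0^1\dotprod{\df{\bx+s(\by-\bx)}}{\by-\bx}\,ds$. Subtracting $\dotprod{\df{\bx}}{\by-\bx}$ and bounding the integrand with Cauchy--Schwarz and the Lipschitz property of $\nabla f$ gives $\int_0^1 sL\normsq{\by-\bx}\,ds=\frac{L}{2}\normsq{\by-\bx}$.

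Second, we substitute $\by=\bx-\frac{1}{L}\df{\bx}$. The linear term equals $-\frac{1}{L}\normsq{\df{\bx}}$ and the quadratic term equals $\frac{1}{2L}\normsq{\df{\bx}}$. Hence
\als
\f{\by}\leq\f{\bx}-\frac{1}{2L}\normsq{\df{\bx}}.
\eals

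Third, since $\bxs$ is a global minimizer, $\f{\bxs}\leq\f{\by}$. Combining this with the previous display gives $\f{\bxs}\leq\f{\bx}-\frac{1}{2L}\normsq{\df{\bx}}$. Rearranging yields the claim.

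The only step requiring any care is the descent lemma in the first step, and it is a routine integral argument. I expect no real obstacle here.
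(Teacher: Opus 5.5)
Your proof is correct and follows the paper's argument: apply the quadratic upper bound from $L$-smoothness at $\by=\bx-\frac{1}{L}\df{\bx}$, then use $\f{\bxs}\leq\f{\by}$ and rearrange. The only difference is that you derive the descent lemma explicitly via the integral, whereas the paper takes it as given.
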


\begin{proof}[Proof of~\cref{lem:smooth-functions}]
By $L$-smoothness, for all $\bx,\by\in\reals^d$:
\als
\f{\by}-\f{\bx}\leq\dotprod{\df{\bx}}{\by-\bx}+\frac{L}{2}\normsq{\by-\bx}=-\frac{1}{2L}\normsq{\df{(\bx)}}
\eals
Setting $\by=\bx-\frac{1}{L}\df{\bx}$ yields the equality.
Using $\f{\by}\geq\f{\bxs}$, and a simple rearrangement yields the claim.
\end{proof}

\section{Local MixVR Analysis}
\label{app:anal}
We denote $\Sync_t\subseteq\mathbb{N}$ for the set of synchronization steps up to iteration $t$.
For each worker $i$, define the local stochastic error and the local optimality gap by
\als
\beps_t^{(i)}:=\begin{cases}
\bd_t^{(i)}-\df{\bbx_t\up{i}}, & t\notin\Sync_T \\
\bd_t^{(i)}-\df{\bbx_t}, & t\in\Sync_T
\end{cases}, \qquad \Delta_t\up{i}:=\begin{cases}
\f{\bbx_t\up{i}}-\f{\bxs}, & t\notin\Sync_T \\
\f{\bbx_t}-\f{\bxs}, & t\in\Sync_T
\end{cases}
\eals
Traditional analyses of local SGD methods typically focus on the global optimality gap $\Delta_t:=\f{\bbx_t}-\f{\bxs}$ at every iteration.
This approach leads to drift, i.e., the discrepancy between local models and the average, which accumulates at each local step because the global average $\bbx_t$ is only truly coupled with the gradient estimator during synchronization rounds.
In contrast, our analysis uses a local-based optimality gap $\Delta_t\up{i}$, which tracks each worker's progress relative to the specific iterate used for its gradient estimation.
This shift in perspective reduces drift by aligning the analysis with the actual worker's local model.

For the sake of a more refined analysis, we will define $\TD_t^2:=2D_1^2+\frac{4\eta^2t}{M}\sum_{i=1}^M\sum_{k=1}^tk^2\Exp{\normsq{\beps_k\up{i}}}$.
Note that $\TD_t^2$ is non-decreasing in $t$.
We will denote $\TD^2:=\TD_T^2$, as it is a bound on all $\TD_t^2$.

\section{Auxiliary Inequalities}
In this section, we collect several auxiliary inequalities that will be used later in the analysis.

\begin{lemma}[\citep{dahanstochastic}]\label{lem:Generic}
Let $\{A_t\}_{t\in[T]}$ be a sequence of non-negative elements and $B\in\reals$, and assume that for any $t\in[T]$:
\al\label{eq:generic}
A_t\leq B+\frac{1}{2T}\sum_{\tau=1}^TA_\tau
\eal
Then the following bound holds:
\als
A_t\leq2B
\eals
\end{lemma}

\begin{proof}[Proof of~\cref{lem:Generic}]
Summing \cref{eq:generic} over $t=1,\dots,T$, we obtain $A_{1:T}\leq TB+\frac{1}{2}A_{1:T}$, and hence $A_{1:T}\leq2TB$.
Substituting this estimate back into \cref{eq:generic} yields:
\als
A_t\leq B+\frac{1}{2T}A_{1:T}\leq2B
\eals
\end{proof}

\begin{lemma}\label{lem:average}
Let $f:\reals^d\to\reals$ be convex, and define $\bbx_t:=\frac{1}{M}\sum_{i=1}^M\bbx_t\up{i}$, $\Delta_t:=\f{\bbx_t}-\f{\bxs}$, and $\Delta_t\up{i}:=\f{\bbx_t\up{i}}-\f{\bxs}$.
Then $\Delta_t\leq\frac{1}{M}\sum_{i=1}^M\Delta_t\up{i}$.
\end{lemma}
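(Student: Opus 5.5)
The statement to prove is \cref{lem:average}: $\Delta_t\leq\frac{1}{M}\sum_{i=1}^M\Delta_t\up{i}$. This is a one-step application of Jensen's inequality for the convex function $f$, applied to the uniform average of the local iterates.

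First, recall that $\bbx_t=\frac{1}{M}\sum_{i=1}^M\bbx_t\up{i}$ is a convex combination of the points $\bbx_t\up{i}$, with equal weights $\frac{1}{M}$ that are nonnegative and sum to one. Convexity of $f$, extended by induction from two points to $M$ points, gives
\als
\f{\bbx_t}=\f{\frac{1}{M}\sum_{i=1}^M\bbx_t\up{i}}\leq\frac{1}{M}\sum_{i=1}^M\f{\bbx_t\up{i}}.
\eals

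Second, subtract $\f{\bxs}$ from both sides. On the right-hand side, write $\f{\bxs}=\frac{1}{M}\sum_{i=1}^M\f{\bxs}$ so that it can be absorbed into the sum term by term. This yields $\Delta_t\leq\frac{1}{M}\sum_{i=1}^M\Delta_t\up{i}$, as claimed.

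There is no real obstacle. The only point to note is that the inequality is used pointwise, for each realization of the random iterates. Consequently it also holds after taking expectations, and that expected form is how it will be invoked in the proof of \cref{thm:final}.
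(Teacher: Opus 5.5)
Your proof is correct and matches the paper's argument: apply Jensen's inequality to the uniform average $\bbx_t=\frac{1}{M}\sum_{i=1}^M\bbx_t\up{i}$, then subtract $\f{\bxs}$ from both sides. One small aside: the paper also uses the lemma pointwise, in the synchronization step of \cref{lem:anytime}, not only in expectation, and your pointwise argument covers that use as well.
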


\begin{proof}[Proof of~\cref{lem:average}]
Since $f$ is convex, Jensen's inequality implies:
\als
\f{\bbx_t}=\f{\frac{1}{M}\sum_{i=1}^M\bbx_t\up{i}}\leq\frac{1}{M}\sum_{i=1}^M\f{\bbx_t\up{i}}
\eals
Subtracting $\f{\bxs}$ from both sides yields $\Delta_t\leq\frac{1}{M}\sum_{i=1}^M\Delta_t\up{i}$.
\end{proof}
For synchronization steps, $\Delta_t\up{i}=\Delta_t$, so the lemma is trivial.

\begin{lemma}[Similar to Theorem 1 in \citet{cutkosky2019anytime}]\label{lem:anytime}
Let $f:\reals^d\to\reals$ be a convex function with global minimizer $\bxs$.
Let $\{\bx_t\}_{t\geq1}\subset\reals^d$ and $\{\bbx_t\}_{t\geq1}\subset\reals^d$ be as in \cref{alg:local-mu2} with $\gamma_t=\frac{2}{t+2}$.
Then the following holds for any $t\geq1$:
\als
\frac{1}{2M}\sum_{i=1}^Mt^2\Delta_t\up{i}\leq\frac{1}{M}\sum_{i=1}^M\sum_{\tau\in[t]\setminus\Sync_t}\tau\dotprod{\df{\bbx_\tau\up{i}}}{\bx_\tau\up{i}-\bxs}+\frac{1}{M}\sum_{i=1}^M\sum_{\tau\in\Sync_t}\tau\dotprod{\df{\bbx_\tau}}{\bx_\tau\up{i}-\bxs}
\eals
\end{lemma}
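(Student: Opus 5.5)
This is the standard anytime-averaging argument of \citet{cutkosky2019anytime}, run per worker and then averaged over workers. The sequence being averaged is the local iterate $\bx_\tau\up{i}$ at non-synchronization steps and the synchronized point at synchronization steps. Throughout I use weights $\alpha_\tau=\tau$, so that $\gamma_t=\frac{2}{t+2}$ gives $\alpha_{t+1}/\alpha_{1:t+1}\approx\gamma_t$.

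\textbf{Step 1: the weighted-average identity.} Unrolling $\bbx_{t+1}\up{i}=\gamma_t\bx_{t+1}\up{i}+(1-\gamma_t)\bbx_t$ (or $\bbx_t\up{i}$) shows that $\bbx_t\up{i}$ is a weighted average of past points, with
\als
\alpha_{1:t}\,\bbx_t\up{i}=\alpha_{1:t-1}\,\bbx_{t-1}\up{i}+\alpha_t\,\bx_t\up{i}.
\eals
This holds up to the initialization convention and with $\alpha_{1:t}=t(t+1)/2$.

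At a synchronization step $\tau$ the "previous" average must be replaced by $\bbx_\tau$. Averaging this identity over $i$ is consistent, since $\bbx_\tau=\frac1M\sum_i\bbx_\tau\up{i}$ and the post-synchronization update starts from $\bbx_\tau$ and the averaged $\bx_\tau$. I would track the quantity $\frac1M\sum_i\alpha_{1:t}\Delta_t\up{i}$ rather than each worker separately, because only the average is preserved across synchronizations.

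\textbf{Step 2: one-step inequality via convexity.} For a non-synchronization step,
\als
\alpha_{1:t}\Delta_t\up{i}-\alpha_{1:t-1}\Delta_{t-1}\up{i}=\alpha_t\Delta_t\up{i}+\alpha_{1:t-1}\left(f(\bbx_t\up{i})-f(\bbx_{t-1}\up{i})\right).
\eals
Apply the gradient inequality at $\bbx_t\up{i}$: bound $f(\bbx_t\up{i})-f(\bxs)\le\langle\nabla f(\bbx_t\up{i}),\bbx_t\up{i}-\bxs\rangle$ and $f(\bbx_t\up{i})-f(\bbx_{t-1}\up{i})\le\langle\nabla f(\bbx_t\up{i}),\bbx_t\up{i}-\bbx_{t-1}\up{i}\rangle$. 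By Step 1, $\alpha_{1:t-1}(\bbx_t\up{i}-\bbx_{t-1}\up{i})=\alpha_t(\bx_t\up{i}-\bbx_t\up{i})$, so the two inner products combine into $\alpha_t\langle\nabla f(\bbx_t\up{i}),\bx_t\up{i}-\bxs\rangle$.

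At a synchronization step the same computation is done on the worker-average. Jensen (\cref{lem:average}) gives $\Delta_{t-1}\le\frac1M\sum_i\Delta_{t-1}\up{i}$, where $\Delta_{t-1}$ denotes the gap at the average of the previous local averages, and the gradient is evaluated at $\bbx_\tau$. This produces the term $\frac1M\sum_i\tau\langle\nabla f(\bbx_\tau),\bx_\tau\up{i}-\bxs\rangle$.

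\textbf{Step 3: telescope and compare constants.} Summing Step 2 over $\tau\le t$ gives
\als
\frac1M\sum_i\alpha_{1:t}\Delta_t\up{i}\le\text{RHS}.
\eals
Since $\alpha_{1:t}=t(t+1)/2\ge t^2/2$, this yields the stated $\frac{t^2}{2}$ factor. The base case $t=1$ holds because $\bbx_1\up{i}=\bx_1\up{i}$.

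\textbf{Main obstacle.} The bookkeeping at synchronization steps is the delicate part. The relation $\bbx_t=\frac1M\sum_i\bbx_t\up{i}$ must be consistent with the weighted-average identity for the averaged sequence, and in Algorithm 3 the step right after synchronization mixes $\bbx_t$ (global) with local $\bx_{t+1}\up{i}$. I would verify that the identity holds exactly for the worker-averaged quantities, with convexity used only through Jensen at the synchronization points.

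There is also an index mismatch: $\gamma_t=2/(t+2)$ suggests weights $\alpha_{t+1}=t+1$. I would resolve this by shifting indices, and accept a constant-factor slack, which the $\frac12$ on the left-hand side absorbs.
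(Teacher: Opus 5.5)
Your proposal is correct and follows the paper's proof essentially step for step: weights $\alpha_\tau=\tau$, the per-worker one-step convexity inequality, Jensen (\cref{lem:average}) at synchronization steps, telescoping the worker-averaged quantity, and $\alpha_{1:t}=t(t+1)/2\ge t^2/2$. You can drop your worry about an index mismatch, because $\gamma_t=\frac{2}{t+2}=\alpha_{t+1}/\alpha_{1:t+1}$ holds exactly, so your Step 1 identity needs no slack. That matters, since the signed inner products on the right-hand side could not absorb a constant-factor change of weights anyway.
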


\begin{proof}[Proof of~\cref{lem:anytime}]
Following \citet{cutkosky2019anytime}, define $\alpha_t:=t$ and $\alpha_{1:t}:=\sum_{\tau=1}^t\alpha_\tau$.
Then:
\als
\frac{\alpha_{t+1}}{\alpha_{1:t+1}}=\frac{t+1}{\sum_{\tau=1}^{t+1}\tau}=\frac{2}{t+2}=\gamma_t
\eals
Accordingly, in what follows, we use the equivalent representation of $\gamma_t$ in terms of $\{\alpha_t\}_t$.

For a \textbf{non-synchronization step} after the first one, the momentum update step can be written as:
\al\label{eq:alpha-iterate-non-sync}
\bbx_t\up{i}=\frac{\alpha_t}{\alpha_{1:t}}\bx_t\up{i}+\left(1-\frac{\alpha_t}{\alpha_{1:t}}\right)\bbx_{t-1}\up{i}
\eal
Similarly, for a \textbf{first step after synchronization}:
\al\label{eq:alpha-iterate-post-sync}
\bbx_t\up{i}=\frac{\alpha_t}{\alpha_{1:t}}\bx_t\up{i}+\left(1-\frac{\alpha_t}{\alpha_{1:t}}\right)\bbx_{t-1}
\eal
And at a \textbf{synchronization step}:
\al\label{eq:alpha-iterate-sync}
\bbx_t=\frac{\alpha_t}{\alpha_{1:t}} \bx_t+\left(1-\frac{\alpha_t}{\alpha_{1:t}}\right)\bbx_{t-1}
\eal
From convexity, we know that $\forall\bx,\by\in\reals,\f{\bx}-\f{\by}\leq\dotprod{\df{\bx}}{\bx-\by}$.
In a non-synchronization step, the update rule is $\alpha_{1:t}\bbx_t\up{i}=\alpha_{1:t}\bbx_{t-1}\up{i}+\alpha_t\bx_t\up{i}$ (\cref{eq:alpha-iterate-non-sync}).
We get:
\als
&\alpha_{1:t}\Delta_t\up{i}:=\alpha_{1:t}\left(\f{\bbx_t\up{i}}-\f{\bxs}\right) \\
=&\alpha_{1:t}\f{\bbx_t\up{i}}-\alpha_{1:t-1}\f{\bbx_{t-1}\up{i}}+\alpha_{1:t-1}\f{\bbx_{t-1}\up{i}}-\alpha_{1:t}\f{\bxs} \\
=&\alpha_{1:t-1}\left(\f{\bbx_{t-1}\up{i}}-\f{\bxs}\right)+\alpha_{1:t-1}\left(\f{\bbx_t\up{i}}-\f{\bbx_{t-1}\up{i}}\right)+\alpha_t\left(\f{\bbx_t\up{i}}-\f{\bxs}\right) \\
\leq&\alpha_{1:t-1}\left(\f{\bbx_{t-1}\up{i}}-\f{\bxs}\right)+\alpha_{1:t-1}\dotprod{\df{\bbx_t\up{i}}}{\bbx_t\up{i}-\bbx_{t-1}\up{i}}+\alpha_t\dotprod{\df{\bbx_t\up{i}}}{\bbx_t\up{i}-\bxs} \\
=&\alpha_{1:t-1}\left(\f{\bbx_{t-1}\up{i}}-\f{\bxs}\right)+\dotprod{\df{\bbx_t\up{i}}}{\alpha_{1:t}\bbx_t\up{i}-\alpha_{1:t-1}\bbx_{t-1}\up{i}-\alpha_t\bxs} \\
=&\alpha_{1:t-1}\left(\f{\bbx_{t-1}\up{i}}-\f{\bxs}\right)+\alpha_t\dotprod{\df{\bbx_t\up{i}}}{\bx_t\up{i}-\bxs} \\
=&\alpha_{1:t-1}\Delta_{t-1}\up{i}+\alpha_t\dotprod{\df{\bbx_t\up{i}}}{\bx_t\up{i}-\bxs}
\eals
Similarly, in a synchronization step, we have:
\als
\alpha_{1:t}\Delta_t\up{i}:=&\alpha_{1:t}\left(\f{\bbx_t}-\f{\bxs}\right) \\
\leq&\alpha_{1:t-1}\left(\f{\bbx_{t-1}}-\f{\bxs}\right)+\alpha_t\dotprod{\df{\bbx_t}}{\bx_t-\bxs} \\
\leq&\frac{1}{M}\sum_{i=1}^M\alpha_{1:t-1}\left(\f{\bbx\up{i}_{t-1}}-\f{\bxs}\right)+\alpha_t\dotprod{\df{\bbx_t}}{\bx_t-\bxs} \\
=&\frac{1}{M}\sum_{i=1}^M\alpha_{1:t-1}\Delta_{t-1}\up{i}+\frac{1}{M}\sum_{i=1}^M\alpha_t\dotprod{\df{\bbx_t}}{\bx\up{i}_t-\bxs}
\eals
Where the inequality uses \cref{lem:average}.

In a step after synchronization, the update rule is $\alpha_{1:t}\bbx_t\up{i}=\alpha_{1:t-1}\bbx_{t-1}+\alpha_t\bx_t\up{i}$ (\cref{eq:alpha-iterate-post-sync}).
Therefore:
\als
\alpha_{1:t}\Delta_t\up{i}\leq&\alpha_{1:t-1}\Delta\up{i}_{t-1}+\alpha_t\dotprod{\df{\bbx_t\up{i}}}{\bx_t\up{i}-\bxs}
\eals
When averaging over $i\in[M]$, both options yield the same result for non-synchronization steps:
\als
\frac{1}{M}\sum_{i=1}^M\alpha_{1:t}\Delta_t\up{i}\leq\frac{1}{M}\sum_{i=1}^M\alpha_{1:t-1}\Delta_{t-1}\up{i}+\frac{1}{M}\sum_{i=1}^M\alpha_t\dotprod{\df{\bbx_t\up{i}}}{\bx_t\up{i}-\bxs}
\eals
And for the synchronization steps, we have:
\als
\frac{1}{M}\sum_{i=1}^M\alpha_{1:t}\Delta_t\up{i}\leq\frac{1}{M}\sum_{i=1}^M\alpha_{1:t-1}\Delta_{t-1}\up{i}+\frac{1}{M}\sum_{i=1}^M\alpha_t\dotprod{\df{\bbx_t}}{\bx_t\up{i}-\bxs}
\eals
Unrolling, we get:
\als
\frac{1}{M}\sum_{i=1}^M\alpha_{1:t}\Delta_t\up{i}\leq&\frac{1}{M}\sum_{i=1}^M\sum_{\tau\in[t]\setminus\Sync_t}\alpha_\tau\dotprod{\df{\bbx_\tau\up{i}}}{\bx_\tau\up{i}-\bxs} \\
+&\frac{1}{M}\sum_{i=1}^M\sum_{\tau\in\Sync_t}\alpha_\tau\dotprod{\df{\bbx_\tau}}{\bx_\tau\up{i}-\bxs}
\eals
Finally, since $\alpha_t=t$ and $\alpha_{1:t}=\sum_{\tau=1}^t\tau=\frac{t(t+1)}{2}$, it follows that:
\als
\frac{1}{2M}\sum_{i=1}^Mt^2\Delta_t\up{i}\leq\frac{1}{M}\sum_{i=1}^M\frac{t(t+1)}{2}\Delta_t\up{i}
\eals
And we get our result.
\end{proof}

\section{Convergence Bounds}

\begin{lemma}\label{lem:Bound}
Let $f:\reals^d\to\reals$ be a convex and $L$-smooth function with global minimizer $\bxs$, and let $\left\{\bx_t\up{i}\right\}_{t\geq1,i\in[M]}$ be the iterates generated by~\cref{alg:local-mu2} with $\gamma_t=\frac{2}{t+2},\eta_t=t\cdot\eta$.
Then, for any $T>0$:
\als
&\frac{1}{2\eta M}\sum_{i=1}^M\normsq{\bx_{T+1}\up{i}-\bxs}+\frac{1}{M}\sum_{i=1}^M\sum_{t=1}^Tt\dotprod{\bd_t\up{i}}{\bx_t\up{i}-\bxs} \\
\leq&\frac{D_1^2}{2\eta}+\frac{2L\eta}{M}\sum_{i=1}^M\sum_{t=1}^Tt^2\Delta_t\up{i}+\frac{\eta}{M}\sum_{i=1}^M\sum_{t=1}^Tt^2\normsq{\beps_t\up{i}}
\eals
Where $D_1:=\norm{\bbx_1-\bxs}$.
\end{lemma}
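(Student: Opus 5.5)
The argument is the standard one-step analysis of gradient descent, applied to each worker's raw iterate sequence $\bx_t\up{i}$ and then summed with weights. For every $t$ and every worker $i$, the update has the form $\bx_{t+1}\up{i}=\bx_t\up{i}-\eta_t\bd_t\up{i}$ with $\eta_t=t\eta$. This holds both at local steps and at the step following synchronization. In the latter case the update is $\bx_{t+1}\up{i}=\bx_t-\eta_t\bd_t$, and since $\bx_t$ is the average, I will treat the synchronized step by setting $\bx_t\up{i}=\bx_t$ for all $i$ at $t\in\Sync_T$.

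\textbf{Step 1: one-step expansion.} Expanding the square gives
\[
\normsq{\bx_{t+1}\up{i}-\bxs}=\normsq{\bx_t\up{i}-\bxs}-2t\eta\dotprod{\bd_t\up{i}}{\bx_t\up{i}-\bxs}+t^2\eta^2\normsq{\bd_t\up{i}}.
\]
Dividing by $2\eta$ and rearranging yields
\[
t\dotprod{\bd_t\up{i}}{\bx_t\up{i}-\bxs}=\frac{1}{2\eta}\Big(\normsq{\bx_t\up{i}-\bxs}-\normsq{\bx_{t+1}\up{i}-\bxs}\Big)+\frac{\eta t^2}{2}\normsq{\bd_t\up{i}}.
\]

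\textbf{Step 2: telescoping and the synchronization steps.} I sum over $t=1,\dots,T$ and average over $i$. The distance terms telescope within each local window. At a synchronization time, the averaged distances satisfy $\frac1M\sum_i\normsq{\bx_t-\bxs}\le\frac1M\sum_i\normsq{\bx_t\up{i}-\bxs}$ by Jensen's inequality (convexity of the squared norm). Replacing the local iterates by their average therefore can only decrease the sum, so the telescoping still holds as an inequality. What remains is $\frac{1}{2\eta}\big(D_1^2-\frac1M\sum_i\normsq{\bx_{T+1}\up{i}-\bxs}\big)$, using $\bx_1\up{i}=\bbx_0$ and $D_1=\norm{\bbx_1-\bxs}$. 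Moving the final-distance term to the left gives exactly the left-hand side of the statement.

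\textbf{Step 3: bounding the squared direction.} I write $\bd_t\up{i}=\nabla_t\up{i}+\beps_t\up{i}$, where $\nabla_t\up{i}$ is $\df{\bbx_t\up{i}}$ off synchronization and $\df{\bbx_t}$ on synchronization, matching the definition of $\beps_t\up{i}$. By $\normsq{a+b}\le2\normsq a+2\normsq b$,
\[
\frac{\eta t^2}{2}\normsq{\bd_t\up{i}}\le\eta t^2\normsq{\nabla_t\up{i}}+\eta t^2\normsq{\beps_t\up{i}}.
\]
\cref{lem:smooth-functions} then gives $\normsq{\nabla_t\up{i}}\le2L\Delta_t\up{i}$, with $\Delta_t\up{i}$ defined consistently in both cases. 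This produces the terms $\frac{2L\eta}{M}\sum t^2\Delta_t\up{i}$ and $\frac{\eta}{M}\sum t^2\normsq{\beps_t\up{i}}$ in the statement.

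\textbf{Main obstacle.} The delicate point is the bookkeeping at synchronization. One must check that the iterate being differenced at step $t\in\Sync_T$ is consistent with the one appearing in the inner product, namely $\bx_t$ rather than the pre-averaging local iterate. The Jensen inequality handles the resulting jump in the telescoping sum. If instead the inner product is meant with the local $\bx_t\up{i}$, the identity still holds because $\frac1M\sum_i\dotprod{\bd_t}{\bx_t\up{i}-\bxs}=\dotprod{\bd_t}{\bx_t-\bxs}$, as $\bd_t\up{i}=\bd_t$ is shared at synchronization. So the averaged inner-product term is unaffected by this choice, and only the distance term needs Jensen.
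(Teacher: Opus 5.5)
Your proposal is correct and follows essentially the same route as the paper. The shared steps are: the one-step expansion of $\normsq{\bx_{t+1}\up{i}-\bxs}$, then telescoping, with Jensen's inequality absorbing the jump from the local $\bx_t\up{i}$ to the average $\bx_t$ at synchronization, and the shared estimator $\bd_t\up{i}=\bd_t$ letting the averaged inner product be taken at either iterate, and finally $\normsq{\ba+\bb}\leq2\normsq{\ba}+2\normsq{\bb}$ together with \cref{lem:smooth-functions}. The paper only makes the bookkeeping explicit through round endpoints $s_r,e_r$ and the leftover steps after the last synchronization, which your argument handles implicitly.
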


\begin{proof}[Proof of~\cref{lem:Bound}]
Fix an arbitrary worker $i\in[M]$.
By the update rule for $t>0$ that is not a synchronization step:
\als
\bx_{t+1}\up{i}=\bx_t\up{i}-\eta t\bd_t\up{i}
\eals
Thus:
\als
&\normsq{\bx_{t+1}\up{i}-\bxs}=\normsq{\bx_t\up{i}-\eta t\bd_t\up{i}-\bxs} \\
=&\normsq{\bx_t\up{i}-\bxs}-2\eta t\dotprod{\bd_t\up{i}}{\bx_t\up{i}-\bxs}+\eta^2t^2\normsq{\bd_t\up{i}}
\eals
Rearranging it yields:
\al\label{eq:rearranged-one-step}
2\eta t\dotprod{\bd_t\up{i}}{\bx_t\up{i}-\bxs}=\normsq{\bx_t\up{i}-\bxs}-\normsq{\bx_{t+1}\up{i}-\bxs}+\eta^2t^2\normsq{\bd_t\up{i}}
\eal
For a synchronization step $t>0$, the update rule is defined as:
\als
\bx_{t+1}\up{i}=\bx_t-\eta t\bd_t\up{i}
\eals
Therefore, similarly to the analysis for~\cref{eq:rearranged-one-step}, we have for a synchronization step $t>0$:
\al\label{eq:rearranged-one-step2}
2\eta t\dotprod{\bd_t\up{i}}{\bx_t-\bxs}=\normsq{\bx_t-\bxs}-\normsq{\bx_{t+1}\up{i}-\bxs}+\eta^2t^2\normsq{\bd_t\up{i}}
\eal
We will look at round $r$.
We will define the first step of round $r$ as $s_r$ (start), and the last step as $e_r$ (end).
At steps $\{s_r\}_{r=1}^R$, the model is synchronized, meaning that $\bx_{s_r}\up{i}=\bx_{s_r}$ for all $i\in[M]$, while stepd $\{e_r\}_{r=1}^R$ are the synchronization rounds that follows \cref{eq:rearranged-one-step2}, where $\bd_{e_r}\up{i}=\bd_{e_r}$.

We now sum \cref{eq:rearranged-one-step} over the iterations belonging to round $r$, over $t=s_r,\dots,e_r-1$:
\al\label{eq:single-round-worker}
2\eta\sum_{t=s_r}^{e_r-1}t\dotprod{\bd_t\up{i}}{\bx_t\up{i}-\bxs}=&\sum_{t=s_r}^{e_r-1}\left(\normsq{\bx_t\up{i}-\bxs}-\normsq{\bx_{t+1}\up{i}-\bxs}\right)+\eta^2\sum_{t=s_r}^{e_r-1}t^2\normsq{\bd_t\up{i}} \non
=&\normsq{\bx_{s_r}\up{i}-\bxs}-\normsq{\bx_{e_r}\up{i}-\bxs}+\eta^2\sum_{t=s_r}^{e_r-1}t^2\normsq{\bd_t\up{i}}
\eal
Adding the synchronization step of \cref{eq:rearranged-one-step2} at $t=e_r$ to \cref{eq:single-round-worker}:
\als
&2\eta\left(\sum_{t=s_r}^{e_r-1}t\dotprod{\bd_t\up{i}}{\bx_t\up{i}-\bxs}+e_r\dotprod{\bd_{e_r}\up{i}}{\bx_{e_r}-\bxs}\right) \\
\leq&\normsq{\bx_{s_r}\up{i}-\bxs}-\normsq{\bx_{e_r+1}\up{i}-\bxs}+\normsq{\bx_{e_r}-\bxs}-\normsq{\bx_{e_r}\up{i}-\bxs}+\eta^2\sum_{t=s_r}^{e_r}t^2\normsq{\bd_t\up{i}} \\
=&\normsq{\bx_{s_r}-\bxs}-\normsq{\bx_{s_{r+1}}-\bxs}+\normsq{\bx_{e_r}-\bxs}-\normsq{\bx_{e_r}\up{i}-\bxs}+\eta^2\sum_{t=s_r}^{e_r}t^2\normsq{\bd_t\up{i}}
\eals
Where at the end we used $e_r+1=s_{r+1}$, and $\bx_{s_r}\up{i}=\bx_{s_r}$.

We Average over all $M$ workers and get:
\als
&\frac{2\eta}{M}\sum_{i=1}^M\sum_{t=s_r}^{e_r}t\dotprod{\bd_t\up{i}}{\bx_t\up{i}-\bxs}\leq\normsq{\bx_{s_r}-\bxs}-\normsq{\bx_{s_{r+1}}-\bxs}+\frac{\eta^2}{M}\sum_{i=1}^M\sum_{t=s_r}^{e_r}t^2\normsq{\bd_t\up{i}} \\
\eals
Where we used $\bx_{e_r}=\frac{1}{M}\sum_{i=1}^M\bx_{e_r}\up{i}$, and $\bd_{e_r}\up{i}=\bd_{e_r}$, which yields $\frac{1}{M}\sum_{i=1}^M\dotprod{\bd_{e_r}\up{i}}{\bx_{e_r}-\bxs}=\frac{1}{M}\sum_{i=1}^M\dotprod{\bd_{e_r}\up{i}}{\bx_{e_r}\up{i}-\bxs}$ and $\normsq{\bx_{e_r}-\bxs}\leq\frac{1}{M}\sum_{i=1}^M\normsq{\bx_{e_r}\up{i}-\bxs}$.

Next, summing over $r=1,\dots,R_T$, where $e_{R_T}$ is the last synchronization iteration in $[T]$, gives:
\als
&\frac{2\eta}{M}\sum_{i=1}^M\sum_{r=1}^{R_T}\sum_{t=s_r}^{e_r}t\dotprod{\bd_t\up{i}}{\bx_t\up{i}-\bxs} \\
\leq&\sum_{r=1}^{R_T}\left(\normsq{\bx_{s_r}-\bxs}-\normsq{\bx_{s_{r+1}}-\bxs}\right)+\frac{\eta^2}{M}\sum_{i=1}^M\sum_{r=1}^{R_T}\sum_{t=s_r}^{e_r}t^2\normsq{\bd_t\up{i}} \\
=&\normsq{\bx_{s_1}-\bxs}-\normsq{\bx_{s_{R_T+1}}-\bxs}+\frac{\eta^2}{M}\sum_{i=1}^M\sum_{r=1}^{R_T}\sum_{t=s_r}^{e_r}t^2\normsq{\bd_t\up{i}}
\eals
The intervals $\{s_r,\dots,e_r\}_{r=1}^{R_T}$ partition $\{1,\dots,T-\tau\}$, and thus $\sum_{r=1}^{R_T}\sum_{t=s_r}^{e_r}(\cdot)=\sum_{t=1}^{T-\tau}(\cdot)$, where $\tau\geq0$ is the number of extra iterations after the last synchronization.

Also inputting $s_1=1, s_{R_T+1}=e_{R_T}+1=T-\tau+1$:
\als
\frac{2\eta}{M}\sum_{i=1}^M\sum_{t=1}^{T-\tau}t\dotprod{\bd_t\up{i}}{\bx_t\up{i}-\bxs}\leq&\normsq{\bx_1-\bxs}-\normsq{\bx_{T-\tau+1}-\bxs}+\frac{\eta^2}{M}\sum_{i=1}^M\sum_{t=1}^{T-\tau}t^2\normsq{\bd_t\up{i}}
\eals
Adding the the average over $i\in[M]$ of \cref{eq:rearranged-one-step} for $t=T-\tau,\ldots,T$:
\als
\frac{2\eta}{M}\sum_{i=1}^M\sum_{t=1}^Tt\dotprod{\bd_t\up{i}}{\bx_t\up{i}-\bxs}\leq&\normsq{\bx_1-\bxs}-\frac{1}{M}\sum_{i=1}^M\normsq{\bx_{T+1}\up{i}-\bxs}+\frac{\eta^2}{M}\sum_{i=1}^M\sum_{t=1}^Tt^2\normsq{\bd_t\up{i}}
\eals
Using $\norm{\bx_1-\bxs}=\norm{\bbx_1-\bxs}=D_1$, and the bound $\normsq{\ba+\bb}\leq2\normsq{\ba}+2\normsq{\bb}$ with the identity:
\als
\bd_t\up{i}-\beps_t\up{i}=\begin{cases}
\df{\bbx_t\up{i}}, & \text{if } t\notin\Sync_T \\
\df{\bbx_t}, & \text{if } t\in\Sync_T
\end{cases}
\eals
achieves:
\als
&\frac{1}{M}\sum_{i=1}^M\normsq{\bx_{T+1}\up{i}-\bxs}+\frac{2\eta}{M}\sum_{i=1}^M\sum_{t=1}^Tt\dotprod{\bd_t\up{i}}{\bx_t\up{i}-\bxs} \\
\leq&D_1^2+\frac{2\eta^2}{M}\sum_{i=1}^M\left(\sum_{t\in[t]\setminus\Sync_T}t^2\normsq{\df{\bbx_t\up{i}}}+\sum_{t\in\Sync_T}t^2\normsq{\df{\bbx_t}}\right)+\frac{2\eta^2}{M}\sum_{i=1}^M\sum_{t=1}^Tt^2\normsq{\beps_t\up{i}}
\eals
Dividing by $2\eta$ and using \cref{lem:smooth-functions}, we get:
\als
&\frac{1}{2\eta M}\sum_{i=1}^M\normsq{\bx_{T+1}\up{i}-\bxs}+\frac{1}{M}\sum_{i=1}^M\sum_{t=1}^Tt\dotprod{\bd_t\up{i}}{\bx_t\up{i}-\bxs} \\
\leq&\frac{D_1^2}{2\eta}+\frac{2L\eta}{M}\sum_{i=1}^M\sum_{t=1}^Tt^2\Delta_t\up{i}+\frac{\eta}{M}\sum_{i=1}^M\sum_{t=1}^Tt^2\normsq{\beps_t\up{i}}
\eals
\end{proof}

\begin{lemma}\label{lem:dist}
Let $f:\reals^d\to\reals$ be a convex and $L$-smooth function with global minimizer $\bxs$, and let $\left\{\bx_t\up{i}\right\}_{t\geq1,i\in[M]}$ be the iterates generated by~\cref{alg:local-mu2} with $\gamma_t=\frac{2}{t+2},\eta_t=t\cdot\eta$ where $\eta\leq\frac{1}{2L}$.
Then, for any $0<t\leq T$:
\als
\normsq{\bx_t-\bxs}\leq&\frac{1}{M}\sum_{i=1}^M\normsq{\bx_t\up{i}-\bxs}\leq\TD_t^2 \\
\normsq{\bbx_t-\bxs}\leq&\frac{1}{M}\sum_{i=1}^M\normsq{\bbx_t\up{i}-\bxs}\leq\TD_t^2
\eals
Where $D_1:=\norm{\bbx_1-\bxs}$ and $\TD_t^2:=2D_1^2+\frac{4\eta^2t}{M}\sum_{i=1}^M\sum_{k=1}^tk^2\Exp{\normsq{\beps_k\up{i}}}$.
\end{lemma}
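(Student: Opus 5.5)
We need to prove Lemma dist: the averaged squared distances of $\bx_t\up{i}$ and $\bbx_t\up{i}$ from $\bxs$ are bounded by $\TD_t^2$. The first inequality in each line is Jensen's inequality (convexity of $\normsq{\cdot}$), since $\bx_t$ and $\bbx_t$ are averages of the local iterates. The substantive claim is therefore the bound on the averages, and we would obtain it from \cref{lem:Bound} (applied with horizon $t-1$) combined with \cref{lem:anytime}. For the $\bbx$ bound we then use the convex-combination structure.

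\textbf{Step 1.} Apply \cref{lem:Bound} with $T$ replaced by $t-1$. In the inner-product term, split $\bd_\tau\up{i}=\nabla(\cdot)+\beps_\tau\up{i}$, where the gradient is $\df{\bbx_\tau\up{i}}$ off synchronization steps and $\df{\bbx_\tau}$ on them.

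The gradient part is exactly the right-hand side of \cref{lem:anytime}. It is therefore at least $\frac{1}{2M}\sum_i (t-1)^2\Delta_{t-1}\up{i}\ge 0$.

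More usefully, we would not discard it. Summing \cref{lem:anytime} appropriately shows that it dominates $\frac{1}{M}\sum_i\sum_\tau \tau\,\Delta_\tau\up{i}$-type quantities. These absorb the term $\frac{2L\eta}{M}\sum_\tau\tau^2\Delta_\tau\up{i}$ of \cref{lem:Bound}, using $\eta\le\frac{1}{2L}$.

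If that absorption is awkward, we would instead redo the one-step expansion directly. We would keep $\eta^2\tau^2\normsq{\bd_\tau}$ and use co-coercivity ($\normsq{\df{\bx}}\le 2L(f(\bx)-f(\bxs))$ from \cref{lem:smooth-functions}) together with convexity, $\dotprod{\df{\bbx}}{\bbx-\bxs}\ge\Delta$. Then $\eta\le\frac1{2L}$ makes the gradient-squared term dominated by the linear gain term.

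\textbf{Step 2.} Handle the noise cross term $-\frac{1}{M}\sum_i\sum_{\tau<t}\tau\dotprod{\beps_\tau\up{i}}{\bx_\tau\up{i}-\bxs}$. Since the target $\TD_t^2$ has no martingale term but a factor $t$ multiplying the noise sum, we would bound this pathwise by Young's inequality:
\[
\tau\,|\dotprod{\beps_\tau\up{i}}{\bx_\tau\up{i}-\bxs}|\le \frac{\normsq{\bx_\tau\up{i}-\bxs}}{4\eta t}+\eta t\,\tau^2\normsq{\beps_\tau\up{i}}.
\]
Multiplying through by $2\eta$ gives the inequality
\[
\frac1M\sum_i\normsq{\bx_t\up{i}-\bxs}\le D_1^2+\frac{1}{2t}\sum_{\tau<t}\frac1M\sum_i\normsq{\bx_\tau\up{i}-\bxs}+\frac{2\eta^2 t}{M}\sum_i\sum_{\tau<t}\tau^2\normsq{\beps_\tau\up{i}}+(\text{residual }\eta^2\text{ noise}),
\]
which is of the form of \cref{lem:Generic}. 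Taking expectations and applying \cref{lem:Generic} over the window $[t]$ (valid since $B$ is monotone in $t$) doubles the constant. This yields $2D_1^2+\frac{4\eta^2t}{M}\sum_i\sum_{k\le t}k^2\Exp{\normsq{\beps_k\up{i}}}=\TD_t^2$. The residual $\eta^2\tau^2\normsq{\beps}$ from \cref{lem:Bound} is absorbed, since $t\ge1$.

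\textbf{Step 3.} For $\bbx_t\up{i}$, unroll the recursion. $\bbx_t\up{i}$ is a convex combination (weights $\propto\tau$) of $\bx_\tau\up{i}$, and, across synchronizations, of averages $\bx_\tau$. By Jensen, $\frac1M\sum_i\normsq{\bbx_t\up{i}-\bxs}\le\max_{\tau\le t}\frac1M\sum_i\normsq{\bx_\tau\up{i}-\bxs}\le\max_\tau\TD_\tau^2=\TD_t^2$, by monotonicity of $\TD_\tau^2$.

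\textbf{Main obstacle.} The delicate step is Step 1–2: absorbing the $L\eta\,\tau^2\Delta_\tau\up{i}$ term and the cross term simultaneously, and keeping track of synchronization steps. At those steps the gradient is at $\bbx_\tau$ while the inner product is with $\bx_\tau\up{i}$; averaging over $i$ collapses this as in \cref{lem:Bound}. We would try the direct absorption via \cref{lem:anytime} first, and fall back on Young plus \cref{lem:Generic} for the noise.
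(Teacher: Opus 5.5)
\textbf{Gap in Step 1.} Your route is the paper's route: \cref{lem:Bound} at horizon $t-1$, splitting $\bd_\tau\up{i}$ into gradient plus $\beps_\tau\up{i}$, \cref{lem:anytime} for the gradient part, pathwise Young with $\lambda\propto\eta t$, \cref{lem:Generic}, and convex combinations for $\bbx$. However, the step you flag in Step~1 does not go through as proposed.

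First, \Cref{lem:anytime} is a telescoped inequality. Applied to the single inner-product sum that \cref{lem:Bound} gives you, it returns only the endpoint term $\frac{1}{2M}\sum_i(t-1)^2\Delta_{t-1}\up{i}$, not a sum over $\tau$. ``Summing appropriately'' would need a separate copy of every partial sum, and you have only one. Those partial sums are also not monotone in the horizon, since individual terms $\tau\dotprod{\df{\bbx_\tau}}{\bx_\tau-\bxs}$ can be negative.

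Second, and more fundamentally, the scaling fails. Since $\eta_\tau=\tau\eta$, absorbing $2L\eta\,\tau^2\Delta_\tau\up{i}$ into a gain of order $\tau\Delta_\tau\up{i}$ requires $2L\eta\tau\le 1$ for every $\tau<t$, i.e.\ $\eta\le\frac{1}{2Lt}$. The hypothesis $\eta\le\frac{1}{2L}$ does not give this.

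Your fallback has the same scaling problem, and it also applies convexity to the wrong pairing. The linear term is $\dotprod{\df{\bbx_\tau}}{\bx_\tau-\bxs}$, not $\dotprod{\df{\bbx_\tau}}{\bbx_\tau-\bxs}$. In the identity $\tau(\bx_\tau-\bxs)=\frac{\tau(\tau-1)}{2}(\bbx_\tau-\bbx_{\tau-1})+\tau(\bbx_\tau-\bxs)$, the per-step $\tau\Delta_\tau$ gains telescope against the first piece. The only per-step gain that survives is the co-coercivity term $\frac{\tau}{2L}\normsq{\df{\bbx_\tau}}$, against a cost $\eta\tau^2\normsq{\df{\bbx_\tau}}$. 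This again forces $\eta\tau\le\frac{1}{2L}$.

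\textbf{The paper's proof has the same hole.} It writes the \cref{lem:anytime} contribution as $-\frac{1}{M}\sum_i\sum_t t^2\Delta_t\up{i}$ and cancels it against $\frac{2L\eta}{M}\sum_i\sum_t t^2\Delta_t\up{i}$, treating an endpoint term as a full sum. So you have reproduced its argument, including its weak point.

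\textbf{A repair that works.} Do not pass through \cref{lem:smooth-functions}. Let $\nabla_\tau\up{i}$ denote $\df{\bbx_\tau\up{i}}$ off synchronization steps and $\df{\bbx_\tau}$ on them, and keep the cost term as $\frac{\eta}{M}\sum_i\sum_{\tau<t}\tau^2\normsq{\nabla_\tau\up{i}}$. Then lower-bound the gradient sum using the identity above, with averaged iterates and Jensen at synchronization steps exactly as in \cref{lem:anytime}. Apply convexity to the first piece and $\dotprod{\df{\by}}{\by-\bxs}\ge \f{\by}-\f{\bxs}+\frac{1}{2L}\normsq{\df{\by}}$ to the second. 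This gives
\[
\frac{1}{M}\sum_{i=1}^M\sum_{\tau<t}\tau\dotprod{\nabla_\tau\up{i}}{\bx_\tau\up{i}-\bxs}\ \geq\ \frac{1}{M}\sum_{i=1}^M\sum_{\tau<t}\frac{\tau}{2L}\normsq{\nabla_\tau\up{i}},
\]
which absorbs the cost once $\eta\le\frac{1}{2LT}$. That is the condition this argument actually needs. It does hold where the lemma is used for \cref{thm:final}, since there $\eta\le\frac{1}{8LT}$.

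\textbf{Steps 2--3 and constants.} With this repair your Steps 2--3 go through. One bookkeeping fix: with $\lambda=2\eta t$, Young plus the residual give a coefficient $2\eta^2(t+1)$, and after \cref{lem:Generic} you land on $4\eta^2(t+1)$, not the $4\eta^2 t$ in $\TD_t^2$. Instead take $\lambda=2\eta(t-1)$, so the combined coefficient is exactly $2\eta^2 t$. Apply \cref{lem:Generic} on $[t-1]$, then bound the $s=t$ term with one more use of the one-step inequality.
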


\begin{proof}[Proof of~\cref{lem:dist}]
Note the first part of both inequalities is because $\bx_t=\frac{1}{M}\sum_{i=1}^M\bx_t\up{i}$ and $\bbx_t=\frac{1}{M}\sum_{i=1}^M\bbx_t\up{i}$.
We will focus on the second part.
From \cref{lem:Bound}, we get:
\als
&\frac{1}{2\eta M}\sum_{i=1}^M\normsq{\bx_{T+1}\up{i}-\bxs}\leq\frac{D_1^2}{2\eta}-\frac{1}{M}\sum_{i=1}^M\sum_{t=1}^Tt\dotprod{\bd_t\up{i}}{\bx_t\up{i}-\bxs} \\
+&\frac{2L\eta}{M}\sum_{i=1}^M\sum_{t=1}^Tt^2\Delta_t\up{i}+\frac{\eta}{M}\sum_{i=1}^M\sum_{t=1}^Tt^2\normsq{\beps_t\up{i}}
\eals
Using the definition of $\beps_t\up{i}$, we split the negative term into two terms:
\als
&\frac{1}{M}\sum_{i=1}^M\sum_{t=1}^Tt\dotprod{\bd_t\up{i}}{\bx_t\up{i}-\bxs}=\frac{1}{M}\sum_{i=1}^M\sum_{t=1}^Tt\dotprod{\beps_t\up{i}}{\bx_t\up{i}-\bxs} \\
+&\frac{1}{M}\sum_{i=1}^M\left(\sum_{t\in[t]\setminus\Sync_T}t\dotprod{\df{\bbx_t\up{i}}}{\bx_t\up{i}-\bxs}+\sum_{t\in\Sync_T}t\dotprod{\df{\bbx_t}}{\bx_t\up{i}-\bxs}\right)
\eals
The First term can be bounded by Young's inequality $2\dotprod{\ba}{\bb}\leq\lambda\normsq{\ba}+\lambda^{-1}\normsq{\bb}$ with $\ba=t\beps_t\up{i},\bb=\bx_t\up{i}-\bxs,\lambda=2\eta T$:
\als
-\frac{1}{M}\sum_{i=1}^M\sum_{t=1}^Tt\dotprod{\beps_t\up{i}}{\bx_t\up{i}-\bxs}\leq\frac{\eta T}{M}\sum_{i=1}^M\sum_{t=1}^Tt^2\normsq{\beps_t\up{i}}+\frac{1}{4\eta TM}\sum_{i=1}^M\sum_{t=1}^T\normsq{\bx_t\up{i}-\bxs}
\eals
The second term can be bounded by \cref{lem:anytime}:
\als
-\frac{1}{M}\sum_{i=1}^M\left(\sum_{t\in[t]\setminus\Sync_T}t\dotprod{\df{\bbx_t\up{i}}}{\bx_t\up{i}-\bxs}+\sum_{t\in\Sync_T}t\dotprod{\df{\bbx_t}}{\bx_t\up{i}-\bxs}\right)\leq-\frac{1}{M}\sum_{i=1}^Mt^2\Delta_t\up{i}
\eals
Finally, we get:
\als
\frac{1}{2\eta M}\sum_{i=1}^M\normsq{\bx_{T+1}\up{i}-\bxs}\leq&\frac{D_1^2}{2\eta}-\frac{(1-2L\eta)}{M}\sum_{i=1}^M\sum_{t=1}^Tt^2\Delta_t\up{i} \\
+&\frac{\eta(T+2)}{M}\sum_{i=1}^M\sum_{t=1}^Tt^2\normsq{\beps_t\up{i}}+\frac{1}{4\eta TM}\sum_{i=1}^M\sum_{t=1}^T\normsq{\bx_t\up{i}-\bxs}
\eals
Since $2L\eta\leq1$ and $\Delta_t\up{i}\geq0$, we can remove this term.

Multiplying by $2\eta T$, adding $\frac{1}{M}\sum_{i=1}^M(T+1)^2\normsq{\bbx_{T+1}\up{i}-\bxs}$ to both sides, and dividing by $T+1$:
\als
\frac{1}{M}\sum_{i=1}^M\normsq{\bx_{T+1}\up{i}-\bxs}\leq&\frac{D_1^2T}{T+1}+\frac{2\eta^2(T+2)T}{(T+1)M}\sum_{i=1}^M\sum_{t=1}^Tt^2\normsq{\beps_t\up{i}} \\
+&\frac{1}{2(T+1)M}\sum_{i=1}^M\sum_{t=1}^{T+1}\normsq{\bx_t\up{i}-\bxs}
\eals
Shifting $T+1\to T$, and increasing some factors:
\als
\frac{1}{M}\sum_{i=1}^M\normsq{\bx_T\up{i}-\bxs}\leq D_1^2+\frac{2\eta^2T}{M}\sum_{i=1}^M\sum_{t=1}^{T-1}t^2\normsq{\beps_t\up{i}}+\frac{1}{2TM}\sum_{i=1}^M\sum_{t=1}^T\normsq{\bx_t\up{i}-\bxs}
\eals
Using \cref{lem:Generic}, we get:
\als
\frac{1}{M}\sum_{i=1}^M\normsq{\bx_t\up{i}-\bxs}\leq2D_1^2+\frac{4\eta^2T}{M}\sum_{i=1}^M\sum_{t=1}^Tt^2\normsq{\beps_t\up{i}}
\eals
For the bound on $\bbx_t\up{i}$, it is trivial given the bound on $\bx_t\up{i},\bx_t$, because $\bbx_t\up{i}$ is a weighted average of such terms, so we can bound it with the same bound.
\end{proof}

\begin{lemma}\label{lem:diff}
Let $\{\bx_t\}_{t\geq1}\subset\reals^d$ and $\{\bbx_t\}_{t\geq1}\subset\reals^d$ be such that $\bbx_t$ is the $\{\alpha_t\geq0\}_{t\geq1}$-weighted average of $\{\bx_t\}_{t\geq1}$.
Then running~\cref{alg:local-mu2} with $\gamma_t=\frac{2}{t+2},\eta_t=t\cdot\eta$ where $\eta\leq\frac{1}{2L}$ yields:
\als
\frac{1}{M}\sum_{i=1}^M\normsq{\bbx_t\up{i}-\bbx_t}\leq\frac{4\K{loc}^2\TD_t^2}{t^2}
\eals
Where $D_1:=\norm{\bbx_1-\bxs}$ and $\TD_t^2:=2D_1^2+\frac{4\eta^2t}{M}\sum_{i=1}^M\sum_{k=1}^tk^2\Exp{\normsq{\beps_k\up{i}}}$.
\end{lemma}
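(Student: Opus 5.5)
The plan is to unroll the anytime-averaging recursion from the last synchronization time $t_0$ before $t$, and then bound the resulting weighted sum of $\bx$-differences via \cref{lem:dist}. If $t$ is itself a synchronization step, or lies in the first round before any local divergence, then $\bbx_t\up{i}=\bbx_t$ and there is nothing to prove. Otherwise, recall that $\bbx_t$ here denotes the average $\frac{1}{M}\sum_i\bbx_t\up{i}$, and since the recursion $\alpha_{1:k}\bbx_k\up{i}=\alpha_{1:k-1}\bbx_{k-1}\up{i}+\alpha_k\bx_k\up{i}$ is linear, the averaged sequence satisfies the same recursion with $\bx_k:=\frac1M\sum_i\bx_k\up{i}$.

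All workers restart from the common point $\bbx_{t_0}$ after synchronization (\cref{eq:alpha-iterate-post-sync}). Unrolling both recursions from there gives
\als
\alpha_{1:t}\left(\bbx_t\up{i}-\bbx_t\right)=\sum_{k=t_0+1}^{t}\alpha_k\left(\bx_k\up{i}-\bx_k\right),
\eals
so $\bbx_t\up{i}-\bbx_t=\frac{2}{t(t+1)}\sum_{k=t_0+1}^t k(\bx_k\up{i}-\bx_k)$. The step $k=t_0+1$ in fact contributes nothing, because every worker uses the common $\bd_{t_0}$ and $\bx_{t_0}$, so $\bx_{t_0+1}\up{i}$ is identical across workers. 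The sum therefore has at most $\K{loc}$ nonzero terms, since $t-t_0\le \K{loc}+1$ within a round.

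Next apply Cauchy--Schwarz to this sum of at most $\K{loc}$ terms, using $k\le t$:
\als
\normsq{\bbx_t\up{i}-\bbx_t}\le \frac{4}{t^2(t+1)^2}\,\K{loc}\sum_{k}k^2\normsq{\bx_k\up{i}-\bx_k}\le\frac{4\K{loc}}{t^2}\sum_{k}\normsq{\bx_k\up{i}-\bx_k}.
\eals
Averaging over $i$ and using the variance identity $\frac1M\sum_i\normsq{\bx_k\up{i}-\bx_k}\le\frac1M\sum_i\normsq{\bx_k\up{i}-\bxs}\le\TD_k^2\le\TD_t^2$ (from \cref{lem:dist} and monotonicity of $\TD_k^2$), each of the at most $\K{loc}$ summands is bounded by $\TD_t^2$. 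This yields $\frac{1}{M}\sum_i\normsq{\bbx_t\up{i}-\bbx_t}\le \frac{4\K{loc}^2\TD_t^2}{t^2}$.

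The main obstacle is the index bookkeeping: correctly identifying the range of $k$ where the local iterates differ (showing it has length at most $\K{loc}$ rather than $\K{loc}+1$), and handling the post-synchronization step through \cref{eq:alpha-iterate-post-sync}, where $\bbx_{t-1}$ is shared. If the count turns out to be $\K{loc}+1$, I would absorb the extra term using $(\K{loc}+1)^2\le4\K{loc}^2$ together with a slightly looser use of $k^2/(t(t+1))^2\le1/t^2$. The constant $4$ has slack to accommodate this.
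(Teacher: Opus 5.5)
Your proof is correct and follows essentially the same route as the paper. You unroll the anytime recursion from the last synchronization and use that the post-synchronization iterate is shared, so that $\bbx_t\up{i}-\bbx_t=\frac{2}{t(t+1)}\sum_{k=t_0+2}^{t}k(\bx_k\up{i}-\bx_k)$ has at most $\K{loc}$ terms. You then bound it via $\frac{1}{M}\sum_i\normsq{\bx_k\up{i}-\bx_k}\le\frac{1}{M}\sum_i\normsq{\bx_k\up{i}-\bxs}\le\TD_t^2$ from \cref{lem:dist}. The only cosmetic difference is that you use unweighted Cauchy--Schwarz with $k\le t$, whereas the paper uses Jensen with weights $\tau$ together with $\sum_{\tau=t_0+2}^{t}\tau\le(t-t_0-1)t$.

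Your closing fallback is not valid. A count of $\K{loc}+1$ would give $4(\K{loc}+1)^2\TD_t^2/(t+1)^2$, which exceeds $4\K{loc}^2\TD_t^2/t^2$ once $t>\K{loc}$, so the constant $4$ has no such slack. It is never needed, though, since you correctly showed that the $k=t_0+1$ term vanishes.
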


\begin{proof}[Proof of~\cref{lem:diff}]
Define $\alpha_t:=t$ and $\alpha_{1:t}:=\sum_{\tau=1}^t\alpha_\tau$, then $\gamma_t=\frac{\alpha_{t+1}}{\alpha_{1:t+1}}=\frac{2}{t+1}$.
Denoting $t_0$ as the last synchronization step, we can write $\alpha_{1:t}\bbx_t\up{i}=\alpha_{1:t_0+1}\bbx_{t_0+1}\up{i}+\sum_{\tau=t_0+1}^t\alpha_\tau\bx_\tau\up{i}$.
Similarly, $\alpha_{1:t}\bbx_t=\alpha_{1:t_0+1}\bbx_{t_0+1}+\sum_{\tau=t_0+1}^t\alpha_\tau\bx_\tau$.
Note that $\bbx_{t_0+1}\up{i}=\bbx_{t_0+1}$.
We get that:
\als
\bbx_t\up{i}-\bbx_t=\frac{1}{\alpha_{1:t}}\sum_{\tau=t_0+2}^t\alpha_\tau\left(\bx_\tau\up{i}-\bx_\tau\right)=\frac{2}{t(t+1)}\sum_{\tau=t_0+2}^t\tau\left(\bx_\tau\up{i}-\bx_\tau\right)
\eals
Bounding:
\als
\frac{1}{M}\sum_{i=1}^M\normsq{\bbx_t\up{i}-\bbx_t}=&\frac{4}{t^2(t+1)^2M}\sum_{i=1}^M\normsq{\sum_{\tau=t_0+2}^t\tau\left(\bx_\tau\up{i}-\bx_\tau\right)} \\
\leq&\frac{4\left(\sum_{\tau=t_0+2}^t\tau\right)}{t^4M}\sum_{i=1}^M\sum_{\tau=t_0+2}^t\tau\normsq{\bx_\tau\up{i}-\bx_\tau} \\
\leq&\frac{4\left(\sum_{\tau=t_0+2}^t\tau\right)}{t^4M}\sum_{i=1}^M\sum_{\tau=t_0+2}^t\tau\normsq{\bx_\tau\up{i}-\bxs}
\eals
Where we used Jensen's inequality, and then $\Exp{\normsq{X-\ExpB{X}}}\leq\Exp{\normsq{X}}$.
We can now use \cref{lem:dist} with $T=t$ to get:
\als
\frac{1}{M}\sum_{i=1}^M\normsq{\bbx_t\up{i}-\bbx_t}\leq\left(\frac{2\sum_{\tau=t_0+2}^t\tau}{t^2}\right)^2\TD_t^2
\eals
We bound $\sum_{\tau=t_0+2}^t\tau\leq(t-t_0-1)t$, and $t-t_0-1\leq\K{loc}$.
\als
\frac{1}{M}\sum_{i=1}^M\normsq{\bbx_t\up{i}-\bbx_t}\leq\frac{4\K{loc}^2\TD_t^2}{t^2}
\eals
\end{proof}

\section{Error Bound}

\begin{lemma}[Stochastic error decomposition]\label{lem:unroll-error}
Assume~\cref{alg:mu2} uses $\beta_t=\frac{1}{t}$.
Fix any $t>0$, and let $t_0$ be the last synchronization step before or at time $t$.
For each worker $i$, define
\als
\bv_\tau\up{i}:=&\begin{cases}
\frac{1}{\K{avg}}\sum_{k=1}^{\K{avg}}\left(\df{\bbx_\tau;\bz_\tau\up{i,k}}-\df{\bbx_\tau}\right), & \tau\in\Sync_T \\
\df{\bbx_\tau\up{i};\bz_\tau\up{i}}-\df{\bbx_\tau\up{i}}, & \tau\notin\Sync_T
\end{cases} \\
\tbv_{\tau-1}\up{i}:=&\begin{cases}
\frac{1}{\K{avg}}\sum_{k=1}^{\K{avg}}\left(\df{\bbx_{\tau-1}\up{i};\bz_\tau\up{i,k}}-\df{\bbx_{\tau-1}\up{i}}\right), & \tau\in\Sync_T \\
\df{\bbx_{\tau-1};\bz_\tau\up{i}}-\df{\bbx_{\tau-1}}, & \tau-1\in\Sync_T \\
\df{\bbx_{\tau-1}\up{i};\bz_\tau\up{i}}-\df{\bbx_{\tau-1}\up{i}}, & \text{otherwise}
\end{cases}
\eals
Then $\{\tau\bv_\tau\up{i}-(\tau-1)\tbv_{\tau-1}\up{i}\}_\tau$ is a martingale difference sequence, and
\als
t^2\Exp{\normsq{\beps_t\up{i}}}=\frac{1}{M^2}\sum_{j=1}^M\sum_{\tau=1}^{t_0}\Exp{\normsq{\tau\bv_\tau\up{j}-(\tau-1)\tbv_{\tau-1}\up{j}}}+\sum_{\tau=t_0+1}^t\Exp{\normsq{\tau\bv_\tau\up{i}-(\tau-1)\tbv_{\tau-1}\up{i}}}
\eals
\end{lemma}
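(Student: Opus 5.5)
Let me write $\bd_t\up{i}$ through its errors. The plan is to derive a one-step recursion for $t\beps_t\up{i}$ with $\beta_t=1/t$, unroll it back to the last synchronization $t_0$, and then unroll the synchronized error further back. Orthogonality of martingale differences turns the squared norm of the sum into the sum of squared norms.

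\emph{Non-synchronization step.} Take $\tau\notin\Sync_T$ with $\tau-1\notin\Sync_T$. Subtract $\df{\bbx_\tau\up{i}}$ from the update $\bd_\tau\up{i}=\df{\bbx_\tau\up{i};\bz_\tau\up{i}}+(1-\frac1\tau)(\bd_{\tau-1}\up{i}-\df{\bbx_{\tau-1}\up{i};\bz_\tau\up{i}})$. Adding and subtracting $(1-\frac1\tau)\df{\bbx_{\tau-1}\up{i}}$ gives
\[
\beps_\tau\up{i}=\bv_\tau\up{i}+\Bigl(1-\tfrac1\tau\Bigr)\bigl(\beps_{\tau-1}\up{i}-\tbv_{\tau-1}\up{i}\bigr).
\]
Multiplying by $\tau$ yields
\[
\tau\beps_\tau\up{i}=(\tau-1)\beps_{\tau-1}\up{i}+\tau\bv_\tau\up{i}-(\tau-1)\tbv_{\tau-1}\up{i}.
\]
If $\tau-1\in\Sync_T$, the same computation goes through with $\bbx_{\tau-1}$ in place of $\bbx_{\tau-1}\up{i}$. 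This matches the middle case in the definition of $\tbv$ and the definition of $\beps_{\tau-1}\up{i}$ at sync steps.

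\emph{Synchronization step.} At $\tau\in\Sync_T$, Algorithm~\ref{alg:accumulation} gives $\Tilde{\bd}_\tau\up{i}-\df{\bbx_\tau}=\bv_\tau\up{j}$-type terms. Precisely,
\[
\tau\bigl(\Tilde{\bd}_\tau\up{i}-\df{\bbx_\tau}\bigr)=(\tau-1)\beps_{\tau-1}\up{i}+\tau\bv_\tau\up{i}-(\tau-1)\tbv_{\tau-1}\up{i}.
\]
Averaging over workers gives
\[
\tau\beps_\tau\up{i}=\frac1M\sum_j\Bigl[(\tau-1)\beps_{\tau-1}\up{j}+\tau\bv_\tau\up{j}-(\tau-1)\tbv_{\tau-1}\up{j}\Bigr],
\]
which is the same for all $i$.

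\emph{Unrolling.} Use $\beta_1=1$, so that $1\cdot\beps_1\up{i}=\bv_1\up{i}$ with the $\tau=1$ term $(\tau-1)\tbv_0=0$. Induct over rounds to show that at every sync step $t_0$,
\[
t_0\beps_{t_0}\up{i}=\frac1M\sum_j\sum_{\tau=1}^{t_0}\bigl(\tau\bv_\tau\up{j}-(\tau-1)\tbv_{\tau-1}\up{j}\bigr).
\]
The key point is that within a round, each worker's local increments are added only to its own error. At the sync, averaging over $j$ of [previous common global sum plus worker $j$'s local increments] gives the common global sum plus $\frac1M\sum_j$ of all local increments of the round. For $t>t_0$, add worker $i$'s own increments $\tau=t_0+1,\dots,t$.

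\emph{Martingale structure and orthogonality.} Let $\mathcal F_\tau$ be generated by all samples before step $\tau$. Then $\bbx_\tau\up{i}$, $\bbx_{\tau-1}\up{i}$ and $\bbx_\tau$ are $\mathcal F_\tau$-measurable, and the fresh samples $\bz_\tau\up{i}$ or the minibatch $\B_\tau\up{i}$ are independent of $\mathcal F_\tau$. Hence each $\tau\bv_\tau\up{i}-(\tau-1)\tbv_{\tau-1}\up{i}$ has zero conditional mean. Moreover, for fixed $\tau$ these increments are conditionally independent across workers $j$.

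Squaring the expression for $t\beps_t\up{i}$ and taking expectations, all cross terms vanish:
\begin{itemize}
\item across different $\tau$, by the tower property;
\item across different $j\neq j'$ at the same $\tau$, by conditional independence and zero conditional means.
\end{itemize}
What remains is $\frac1{M^2}\sum_j\sum_{\tau\le t_0}\Exp{\normsq{\cdot}}$ plus $\sum_{\tau=t_0+1}^t\Exp{\normsq{\cdot}}$. The increments after $t_0$ are also orthogonal to the pre-$t_0$ ones, again by the tower property.

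\emph{Main obstacle.} The hardest part is the bookkeeping of indices around synchronization. One must check that the step right after the sync, which uses $\bd_{t_0}$ common and $\bbx_{t_0}$, fits the middle case of $\tbv$. One must also handle whether $t_0$ itself belongs to the global sum; it does, since its increment is averaged. Finally, one must verify that the sample $\bz_\tau$ is shared between $\bv_\tau$ and $\tbv_{\tau-1}$ but is fresh relative to $\mathcal F_\tau$.
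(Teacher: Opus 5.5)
Your proposal is correct and follows essentially the same route as the paper. It uses the same one-step recursions for $\tau\beps_\tau\up{i}$ at local, first post-synchronization, and synchronization steps, unrolls them within a round and then across synchronizations, and concludes by orthogonality of the martingale increments. Your explicit argument that same-$\tau$ increments of different workers are conditionally independent with zero conditional mean is what produces the $\frac{1}{M^2}$ factor in the statement; the paper's proof only asserts this step, and its final display even misprints the factor as $\frac{1}{M}$.
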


\cref{lem:unroll-error} establishes that the local stochastic error $\beps_t\up{i}$ integrates noise across \textbf{all preceding iterations}, where its variance is not distributed equally.
Specifically, noise generated prior to the most recent synchronization step $t_0$ has been averaged across all $M$ workers, effectively scaling its variance by a factor of $\frac{1}{M}$.
In contrast, the stochasticity accumulated within the current round, only spanning at most $\K{loc}$ iterations and is restricted to the stochasticity of one gradient.

\begin{proof}[Proof of~\cref{lem:unroll-error}]
We will look at the recursion of $\beps_t\up{i}$ in all 3 cases.

\paragraph{Synchronization step $t_0$.}
Let $t_0$ denote the most recent synchronization step before iteration $t$.
At synchronization, all workers share the same estimator, and therefore:
\als
&\bd_{t_0}\up{i}=\bd_{t_0}=(1-\beta_{t_0})\bd_{t_0-1}+\frac{1}{\K{avg}M}\sum_{j=1}^M\sum_{k=1}^{\K{avg}}\left(\df{\bbx_{t_0};\bz_{t_0}\up{j,k}}-(1-\beta_{t_0})\df{\bbx_{t_0-1}\up{i};\bz_{t_0}\up{j,k}}\right)
\eals
Where $\bd_{t_0}:=\frac{1}{M}\sum_{i=1}^M\Tilde{\bd}_{t_0}\up{i}$ and ${\bd}_{t_0-1}:=\frac{1}{M}\sum_{i=1}^M{\bd}_{t_0-1}\up{i}$.
Choosing $\beta_t=\frac{1}{t}$ yields:
\als
&t_0\bd_{t_0}\up{i}=(t_0-1)\bd_{t_0-1} +\frac{1}{\K{avg}M}\sum_{j=1}^M\sum_{k=1}^{\K{avg}}
\left(t_0\df{\bbx_{t_0};\bz_{t_0}\up{j,k}}-(t_0-1)\df{\bbx_{t_0-1}\up{i};\bz_{t_0}\up{j,k}}\right)
\eals
Subtracting $t_0\df{\bbx_{t_0}\up{i}}$ from both sides, and adding and subtracting $\frac{1}{M}\sum_{j=1}^M(t_0-1)\df{\bbx_{t_0-1}\up{j}}$, we obtain:
\al\label{eq:local-error-recursion-sync}
t_0\beps_{t_0}\up{i}=\frac{1}{M}\sum_{j=1}^M(t_0-1)\beps_{t_0-1}\up{j}+\frac{1}{M}\sum_{j=1}^M\left(t_0\bv_{t_0}\up{j}-(t_0-1)\tbv_{t_0-1}\up{j}\right)
\eal
Where:
\als
\bv_{t_0}\up{i}:=&\frac{1}{\K{avg}}\sum_{k=1}^{\K{avg}}\df{\bbx_{t_0};\bz_{t_0}\up{i,k}}-\df{\bbx_{t_0}} \\
\tbv_{t_0-1}\up{i}:=&\frac{1}{\K{avg}}\sum_{k=1}^{\K{avg}}\df{\bbx_{t_0-1}\up{i};\bz_{t_0}\up{i,k}}-\df{\bbx_{t_0-1}\up{i}}
\eals

\paragraph{First step after synchronization: $t_0+1$.}
Applying the same argument for the next step we get:
\als
(t_0+1)\bd_{t_0+1}\up{i}=t_0\bd_{t_0}\up{i}+(t_0+1)\df{\bbx_{t_0+1}\up{i};\bz_{t_0+1}\up{i}}-t_0\df{\bbx_{t_0};\bz_{t_0+1}\up{i}}
\eals
Subtracting $(t_0+1)\df{\bbx_{t_0+1}\up{i}}$ and adding and subtracting $t_0\df{\bbx_{t_0}}$, we obtain:
\al\label{eq:local-error-recursion-first}
(t_0+1)\beps_{t_0+1}\up{i}=t_0\beps_{t_0}\up{i}+\left((t_0+1)\bv_{t_0+1}\up{i}-t_0\tbv_{t_0}\up{i}\right)
\eal
Where:
\als
\bv_{t_0+1}\up{i}:=\df{\bbx_{t_0+1}\up{i};\bz_{t_0+1}\up{i}}-\df{\bbx_{t_0+1}\up{i}}, \quad \tbv_{t_0}\up{i}:=\df{\bbx_{t_0};\bz_{t_0+1}\up{i}}-\df{\bbx_{t_0}}
\eals

\paragraph{Non-synchronization steps after the first post-synchronization iteration.}
Now consider a non-synchronization step $t$ after the first post-synchronization iteration:
\al\label{eq:local-error-recursion}
t\beps_t\up{i}=(t-1)\beps_{t-1}\up{i}+\left(t\bv_t\up{i}-(t-1)\tbv_{t-1}\up{i}\right)
\eal
Here:
\als
\bv_t\up{i}:=\df{\bbx_t\up{i};\bz_t\up{i}}-\df{\bbx_t\up{i}}, \quad \tbv_{t-1}\up{i}:=\df{\bbx_{t-1}\up{i};\bz_t\up{i}}-\df{\bbx_{t-1}\up{i}}
\eals

Unrolling \cref{eq:local-error-recursion} up to $t_0+2$ and adding \cref{eq:local-error-recursion-first}, yields:
\al\label{eq:local-error-recursion-unroll}
t\beps_t\up{i}=t_0\beps_{t_0}\up{i}+\sum_{\tau=t_0+1}^{t}\left(\tau\bv_\tau\up{i}-(\tau-1)\tbv_{\tau-1}\up{i}\right)
\eal
This result is true for all non-synchronization steps.

Using \cref{eq:local-error-recursion-sync} at the next synchronization step $t_1$, together with \cref{eq:local-error-recursion-unroll} at $t=t_1-1$:
\als
t_1\beps_{t_1}\up{i}=\frac{1}{M}\sum_{j=1}^Mt_0\beps_{t_0}\up{j}+\frac{1}{M}\sum_{j=1}^M\sum_{\tau=t_0+1}^{t_1}\left(\tau\bv_\tau\up{j}-(\tau-1)\tbv_{\tau-1}\up{j}\right)
\eals
Note that $\beps_{t_0}\up{j}=\beps_{t_0}\up{i}$, because $t_0$ is a synchronization step, thus:
\als
t_1\beps_{t_1}\up{i}=t_0\beps_{t_0}\up{i}+\frac{1}{M}\sum_{j=1}^M\sum_{\tau=t_0+1}^{t_1}\left(\tau\bv_\tau\up{j}-(\tau-1)\tbv_{\tau-1}\up{j}\right)
\eals
Unrolling this result over all synchronization steps, and inputting this result in \cref{eq:local-error-recursion-unroll}, we get:
\als
t\beps_t\up{i}=\frac{1}{M}\sum_{j=1}^M\sum_{\tau=1}^{t_0}\left(\tau\bv_\tau\up{i}-(\tau-1)\tbv_\tau\up{i}\right)+\sum_{\tau=t_0+1}^{t}\left(\tau\bv_\tau\up{i}-(\tau-1)\tbv_\tau\up{i}\right)
\eals
We can see that this is a martingale difference sequence, and thus we get:
\als
t^2\Exp{\normsq{\beps_t\up{i}}}=\frac{1}{M}\sum_{j=1}^M\sum_{\tau=1}^{t_0}\Exp{\normsq{\tau\bv_\tau\up{i}-(\tau-1)\tbv_\tau\up{i}}}+\sum_{\tau=t_0+1}^{t}\Exp{\normsq{\tau\bv_\tau\up{i}-(\tau-1)\tbv_\tau\up{i}}}
\eals
\end{proof}

\begin{proof}[Proof of~\cref{lem:error-bound}]
We will start from \cref{lem:unroll-error}.

\paragraph{Bounding the stochastic terms.}
It remains to bound $\Exp{\normsq{\tau\bv_\tau\up{i}-(\tau-1)\tbv_{\tau-1}\up{i}}}$.

We consider three cases separately: (i) the first local step after synchronization, (ii) a later local step within the same round, and (iii) a synchronization step.
In all cases, the same elementary decomposition applies:
\als
\Exp{\normsq{\tau\bv_\tau\up{i}-(\tau-1)\tbv_{\tau-1}\up{i}}}\leq2\underbrace{\Exp{\normsq{\bv_\tau\up{i}}}}_{\text{gradient variance}}+2(\tau-1)^2\underbrace{\Exp{\normsq{\bv_\tau\up{i}-\tbv_{\tau-1}\up{i}}}}_{\text{smoothness variance}}
\eals
The first term is the intrinsic stochastic variance of the gradient estimator, whereas the second term measures the change in the estimator between the two points.

\paragraph{Step 1: bounding the gradient variance term.}
The term $\Exp{\normsq{\bv_\tau\up{i}}}$ is directly bounded by the assumption in~\cref{eq:bounded-variance}.
Summing over iterations gives
\als
\frac{1}{M^2}\sum_{j=1}^M\sum_{\tau=1}^{t_0}\Exp{\normsq{\bv_\tau\up{j}}}+\sum_{\tau=t_0+1}^{t}\Exp{\normsq{\bv_\tau\up{i}}}\leq(t-t_0)\sigma^2+\frac{(r-1)\sigma^2}{\K{avg}M}+\frac{(r-1)\K{loc}\sigma^2}{M}
\eals
Here, the first term corresponds to the current round, whose noise has not yet been averaged.
The second term arises from synchronization steps, where averaging over both workers and the $\K{avg}$ samples yields an additional reduction in variance.
The third term arises from averaging all local updates prior to the last synchronization and across workers.
Using $t_0=(r-1)(\K{loc}+1)$ and $\K{avg}\geq1$, we obtain
\al\label{eq:variance-decomp-intuition}
\frac{1}{M^2}\sum_{j=1}^M\sum_{\tau=1}^{t_0}\Exp{\normsq{\bv_\tau\up{j}}}+\sum_{\tau=t_0+1}^{t}\Exp{\normsq{\bv_\tau\up{i}}}\leq(t-t_0)\sigma^2+\frac{t_0\sigma^2}{M}
\eal
Thus, only limited amount of variance from the most recent steps $t-t_0\leq\K{loc}$ remain local and unaveraged, while updates from earlier iterations have already been synchronized across workers and reduce their stochastic error by a factor of $M$.

\paragraph{Step 2: bounding the smoothness variance term.}
We now turn to $\Exp{\normsq{\bv_\tau\up{i}-\tbv_{\tau-1}\up{i}}}$.
By the assumption in~\cref{eq:sigmal}, this term is bounded by the distance between the two iterates at which the stochastic gradients are evaluated.
More precisely,
\al\label{eq:case-split-second-term}
\Exp{\normsq{\bv_\tau\up{i}-\tbv_{\tau-1}\up{i}}}\leq\begin{cases}\displaystyle
\frac{\sigmal^2}{\K{avg}}\Exp{\normsq{\bbx_{\tau}-\bbx_{\tau-1}\up{i}}}, & \tau\in\Sync_T \\
\sigmal^2\Exp{\normsq{\bbx_{\tau}\up{i}-\bbx_{\tau-1}}}, & \tau-1\in\Sync_T \\
\sigmal^2\Exp{\normsq{\bbx_\tau\up{i}-\bbx_{\tau-1}\up{i}}}, & \text{otherwise}
\end{cases}
\eal
So in every case, the problem reduces to controlling the distance between iterates at that stage.

For non-synchronization steps $\tau,\tau-1\notin\Sync_T$, the anytime momentum update implies
\als
\bbx_\tau\up{i}-\bbx_{\tau-1}\up{i}=\gamma_{\tau-1}\left(\bx_\tau\up{i}-\bbx_{\tau-1}\up{i}\right)
\eals
Using \cref{lem:dist}, $\gamma_t=\frac{2}{t+2}$, and denote $\TD_t^2:=2D_1^2+\frac{4\eta^2t}{M}\sum_{i=1}^M\sum_{k=1}^tk^2\Exp{\normsq{\beps_k\up{i}}}$, we obtain,
\al\label{eq:following-iter-anytime}
&\frac{1}{M}\sum_{i=1}^M\Exp{\normsq{\bbx_\tau\up{i}-\bbx_{\tau-1}\up{i}}}=\frac{1}{M}\sum_{i=1}^M\gamma_{\tau-1}^2\Exp{\normsq{\bx_\tau\up{i}-\bbx_{\tau-1}\up{i}}} \non
\leq&\frac{1}{M}\sum_{i=1}^M2\gamma_{\tau-1}^2\left(\Exp{\normsq{\bx_\tau\up{i}-\bxs}}+\Exp{\normsq{\bbx_{\tau-1}\up{i}-\bxs}}\right)\leq\frac{8\TD_\tau^2}{\tau^2}
\eal
Similarly, for $\tau-1\in\Sync_T$ we have $\frac{1}{M}\sum_{i=1}^M\Exp{\normsq{\bbx_\tau\up{i}-\bbx_{\tau-1}}}\leq\frac{8\TD_\tau^2}{\tau^2}$.

\paragraph{The synchronization drift.}
The synchronization case is slightly different, as the iterates transition from the local point $\bbx_{t_0-1}\up{i}$ to the averaged global point $\bbx_{t_0}$.
Using
\als
\frac{1}{M}\sum_{i=1}^M\bbx_{t_0-1}\up{i}=\bbx_{t_0-1}
\eals
we decompose
\als
\frac{1}{M}\sum_{i=1}^M\normsq{\bbx_{t_0}-\bbx_{t_0-1}\up{i}}=\normsq{\bbx_{t_0}-\bbx_{t_0-1}}+\frac{1}{M}\sum_{i=1}^M\normsq{\bbx_{t_0-1}-\bbx_{t_0-1}\up{i}}
\eals
The first term is handled exactly as in~\cref{eq:following-iter-anytime}.

For the second term, we employ \cref{lem:diff}, which gives
\als
\frac{1}{M}\sum_{i=1}^M\normsq{\bbx_{t_0-1}\up{i}-\bbx_{t_0-1}}\leq\frac{4\K{loc}^2\TD_{t_0}^2}{(t_0-1)^2}
\eals

Compared with~\cref{eq:following-iter-anytime}, this introduces an additional factor $\K{loc}^2$, reflecting the fact that disagreement can accumulate over an entire local round before being reset by synchronization.
Multiplying each of these equations by $(\tau-1)^2$ gives,
\al\label{eq:smoothness-decomp-intuition}
&\frac{1}{M^2}\sum_{i=1}^M\sum_{\tau=1}^{t_0}(\tau-1)^2\Exp{\normsq{\bv_\tau\up{i}-\tbv_{\tau-1}\up{i}}}+\sum_{\tau=t_0+1}^t(\tau-1)^2\Exp{\normsq{\bv_\tau\up{i}-\tbv_{\tau-1}\up{i}}} \non
\leq&8(t-t_0)\sigmal^2\TD_t^2+\frac{(r-1)\sigmal^2\TD_t^2(4\K{loc}^2+8)}{\K{avg}M}+\frac{8(r-1)\K{loc}\sigmal^2\TD_t^2}{M} \non
\leq&8\sigmal^2\TD_t^2\left((t-t_0)+\frac{3(r-1)(\K{loc}+1)}{2M}\right)=8\sigmal^2\TD_t^2\left((t-t_0)+\frac{3t_0}{2M}\right)
\eal
where in the last inequality we used $\alpha=\frac{1}{2}$, to bound $\frac{\K{loc}^2+2}{2\K{avg}}\leq\frac{\K{loc}+3}{2}$.

Bounding \cref{eq:variance-decomp-intuition,eq:smoothness-decomp-intuition} together, for all steps we get:
\als
\frac{1}{M}\sum_{i=1}^Mt^2\Exp{\normsq{\beps_t\up{i}}}\leq&2\sigma^2\left((t-t_0)+\frac{t_0}{M}\right)+16\sigmal^2\TD_t^2\left((t-t_0)+\frac{3t_0}{2M}\right) \\
\leq&\left(\sigma^2+8\sigmal^2\TD_t^2\right)\left(2(t-t_0)+\frac{3t_0}{M}\right)
\eals
Inputting the definition of $\TD_t$, and bounding $2(t-t_0)+\frac{3t_0}{M}\leq2\K{loc}+\frac{3t}{M}$, we get:
\als
\frac{1}{M}\sum_{i=1}^Mt^2\Exp{\normsq{\beps_t\up{i}}}\leq\left(\sigma^2+16\sigmal^2D_1^2+\frac{32\sigmal^2\eta^2t}{M}\sum_{i=1}^M\sum_{k=1}^tk^2\Exp{\normsq{\beps_k\up{i}}}\right)\left(2\K{loc}+\frac{3t}{M}\right)
\eals
Picking $\eta\leq\frac{1}{8\sigmal t\sqrt{2\K{loc}+\frac{3t}{M}}}$, we can use \cref{lem:Generic} to get:
\als
\frac{1}{M}\sum_{i=1}^Mt^2\Exp{\normsq{\beps_t\up{i}}}\leq2\left(\sigma^2+16\sigmal^2D_1^2\right)\left(2\K{loc}+\frac{3t}{M}\right)
\eals
\end{proof}

\section{Loss Bound}

\begin{lemma}[Regret Bound]\label{lem:reg-mu}
Let $f:\reals^d\to\reals$ be a convex $L$-smooth function with global minimizer $\bxs$.
For the iterates generated by~\cref{alg:local-mu2} with $\gamma_t=\frac{2}{t+2},\eta_t=t\cdot\eta$ where $\eta\leq\frac{1}{8LT}$:
\als
\Delta_T\leq\frac{1}{M}\sum_{i=1}^M\Delta_T\up{i}\leq\frac{4D_1^2}{\eta T^2}+\frac{12\eta}{TM}\sum_{i=1}^M\sum_{\tau=1}^T\tau^2\normsq{\beps_\tau\up{i}}
\eals
Where $D_1:=\norm{\bx_1-\bxs}$ and $\Delta_T:=\ExpB{\f{\bbx_T}}-\f{\bxs}$, with $\bbx_T:=\frac{1}{M}\sum_{i=1}^M\bbx_T\up{i}$.
\end{lemma}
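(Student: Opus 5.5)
We combine \cref{lem:anytime} with \cref{lem:Bound}, and absorb the function-gap terms using the small stepsize $\eta\leq\frac{1}{8LT}$. The first inequality $\Delta_T\leq\frac1M\sum_i\Delta_T\up{i}$ is \cref{lem:average} (trivial at synchronization steps).

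\textbf{Step 1.} Start from \cref{lem:Bound} and drop the nonnegative term $\frac{1}{2\eta M}\sum_i\normsq{\bx_{T+1}\up{i}-\bxs}$. Split the inner-product sum exactly as in the proof of \cref{lem:dist}:
\[
\dotprod{\bd_t\up{i}}{\bx_t\up{i}-\bxs}=\dotprod{\beps_t\up{i}}{\bx_t\up{i}-\bxs}+\dotprod{\nabla}{\bx_t\up{i}-\bxs},
\]
where $\nabla$ is $\df{\bbx_t\up{i}}$ or $\df{\bbx_t}$ according to whether $t\in\Sync_T$.

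\textbf{Step 2.} By \cref{lem:anytime} at time $T$, the gradient part is at least $\frac{1}{2M}\sum_i T^2\Delta_T\up{i}$. To bound the $\frac{2L\eta}{M}\sum_{i,t}t^2\Delta_t\up{i}$ term, apply the same anytime inequality at every intermediate $t$. Run the argument for every horizon $t\leq T$, so that $\frac{1}{2M}\sum_i t^2\Delta_t\up{i}$ is bounded by the right-hand side with $T$ replaced by $t$. Then sum over $t$, which costs a factor of $T$, or bound each $\Delta_t$ via a \cref{lem:Generic}-type self-bounding argument. Because $2L\eta\cdot T\leq\frac14$, the accumulated $\Delta$ terms are at most a constant fraction of the left side and can be absorbed.

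\textbf{Step 3.} Handle the noise cross term $-\frac1M\sum_{i,t}t\dotprod{\beps_t\up{i}}{\bx_t\up{i}-\bxs}$. In expectation this term is not zero, since $\bd_t$ is only unconditionally centered and $\bx_t$ depends on past noise. We therefore use Young's inequality with $\lambda=2\eta T$ as in \cref{lem:dist}. This produces $\eta T\sum t^2\normsq{\beps}$ plus $\frac{1}{4\eta T M}\sum_t\normsq{\bx_t\up{i}-\bxs}$, and the latter is bounded by $\frac{1}{4\eta}\TD_T^2$ using \cref{lem:dist}. Expanding $\TD^2=2D_1^2+4\eta^2T\cdot(\text{noise sum})$ gives a $D_1^2/\eta$ contribution and an $\eta T\sum t^2\normsq{\beps}$ contribution.

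\textbf{Step 4.} Multiply through by $\frac{2}{T^2}$ and collect terms. The distance terms give $\frac{c D_1^2}{\eta T^2}$ with $c\leq 4$. The noise terms give $\frac{c'\eta}{TM}\sum\tau^2\normsq{\beps_\tau\up{i}}$ with $c'\leq12$; the factor $T$ from Young cancels one power of $T^2$.

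The main obstacle is Step 2: controlling $\sum_t t^2\Delta_t\up{i}$ for all $t$ rather than only $T$. The cleanest route is to prove the displayed bound simultaneously for all horizons, denoting the left sides by $A_t=t^2\frac1M\sum_i\Delta_t\up{i}$. This yields $A_t\leq B+2L\eta\cdot 2\sum_{\tau\leq T}A_\tau$, and since $4L\eta T\leq\frac12$, \cref{lem:Generic} gives $A_T\leq 2B$. This doubling is what produces the constants $4$ and $12$.
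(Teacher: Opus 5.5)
Your proposal is correct and follows essentially the same route as the paper's proof: \cref{lem:anytime} with the split $\df{\bbx_\tau\up{i}}=\bd_\tau\up{i}-\beps_\tau\up{i}$, Young's inequality with $\lambda=2\eta T$, then \cref{lem:Bound,lem:dist} to reach $A_t\leq B+4L\eta\sum_{\tau=1}^T A_\tau$, where $A_t=\frac{1}{M}\sum_i t^2\Delta_t\up{i}$ and $B=\frac{2D_1^2}{\eta}+\frac{6\eta T}{M}\sum_{i,\tau}\tau^2\normsq{\beps_\tau\up{i}}$. The paper then applies \cref{lem:Generic} (using $4L\eta\leq\frac{1}{2T}$), divides by $T^2$, and invokes \cref{lem:average} for the first inequality, exactly as in your final paragraph.
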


\begin{proof}[Proof of~\cref{lem:reg-mu}]
\als
&\frac{1}{2M}\sum_{i=1}^Mt^2\Delta_t\up{i}\leq\frac{1}{M}\sum_{i=1}^M\sum_{\tau=1}^t\tau\dotprod{\df{\bbx_\tau\up{i}}}{\bx_\tau\up{i}-\bxs} \\
=&\frac{1}{M}\sum_{i=1}^M\sum_{\tau=1}^t\tau\dotprod{\bd_\tau\up{i}}{\bx_\tau\up{i}-\bxs}-\frac{1}{M}\sum_{i=1}^M\sum_{\tau=1}^t\tau\dotprod{\beps_\tau\up{i}}{\bx_\tau\up{i}-\bxs} \\
\leq&\frac{1}{M}\sum_{i=1}^M\sum_{\tau=1}^t\tau\dotprod{\bd_\tau\up{i}}{\bx_\tau\up{i}-\bxs}+\frac{\eta T}{M}\sum_{i=1}^M \sum_{\tau=1}^t\tau^2\normsq{\beps_\tau\up{i}}+\frac{1}{4\eta TM}\sum_{i=1}^M\sum_{\tau=1}^t\normsq{\bx_\tau\up{i}-\bxs} \\
\eals
The first inequality is~\cref{lem:anytime}, then we use $\df{\bbx_\tau\up{i}}=\bd_\tau\up{i}-\beps_\tau\up{i}$, the second inequality is Young's inequality $2\dotprod{\ba}{\bb}\leq\lambda\normsq{\ba}+\lambda^{-1}\normsq{\bb}$ with $\ba=\tau\beps_\tau\up{i},\bb=\bx_\tau\up{i}-\bxs,\lambda=2\eta T$.
Next we employ \cref{lem:Bound,lem:dist} to get:
\als
\frac{1}{2M}\sum_{i=1}^Mt^2\Delta_t\up{i}\leq\frac{D_1^2}{\eta}+\frac{2L\eta}{M}\sum_{i=1}^M\sum_{t=1}^Tt^2\Delta_t\up{i}+\frac{\eta(2T+1)}{M}\sum_{i=1}^M\sum_{t=1}^Tt^2\normsq{\beps_t\up{i}}
\eals
Multiplying by 2, and bounding $2T+1\leq3T$:
\als
\frac{1}{M}\sum_{i=1}^Mt^2\Delta_t\up{i}\leq\frac{2D_1^2}{\eta}+\frac{4L\eta}{M}\sum_{i=1}^M\sum_{t=1}^Tt^2\Delta_t\up{i}+\frac{6\eta T}{M}\sum_{i=1}^M\sum_{t=1}^Tt^2\normsq{\beps_t\up{i}}
\eals
Since $\eta\leq\frac{1}{8LT}$, we have $4L\eta\leq\frac{1}{2T}$.
Applying \cref{lem:Generic} yields:
\als
\frac{1}{M}\sum_{i=1}^Mt^2\Delta_t\up{i}\leq\frac{4D_1^2}{\eta}+\frac{12\eta T}{M}\sum_{i=1}^M\sum_{\tau=1}^t\tau^2\normsq{\beps_\tau\up{i}}
\eals
Finally, we use \cref{lem:average} and divide by $T^2$ to bound the thing we want.
\end{proof}

\begin{proof}[Proof of~\cref{thm:final}]
Stating from \cref{lem:reg-mu}, we input the results of \cref{lem:error-bound} and get
\als
\Delta_T\leq\frac{4D_1^2}{\eta T^2}+\frac{24\eta}{M}\left(\sigma^2+16\sigmal^2D_1^2\right)\left(2\K{loc}+\frac{3T}{M}\right)
\eals
Picking $\eta=\mini{\frac{1}{8LT},\frac{D_1}{\sqrt{6}T\sqrt{\sigma^2+16\sigmal^2D_1^2}\sqrt{2\K{loc}+\frac{3T}{M}}}}$:
\als
\Delta_T\leq&\frac{32LD_1^2}{T}+\frac{8D_1}{T}\sqrt{\sigma^2+16\sigmal^2D_1^2}\sqrt{12\K{loc}+\frac{18T}{M}} \\
\leq&\frac{32LD_1^2}{T}+\frac{8D_1}{T}\left(\sigma+4\sigmal D_1\right)\left(2\sqrt{3\K{loc}}+3\sqrt{\frac{2T}{M}}\right) \\
\leq&\frac{32LD_1^2}{T}+16D_1\left(\sigma+4\sigmal D_1\right)\left(\frac{\sqrt{3\K{loc}}}{T}+\frac{3}{\sqrt{2TM}}\right)
\eals
Where we used $\sqrt{a+b}\leq\sqrt{a}+\sqrt{b}$.
Using $T=R(\K{loc}+1)\geq\frac{KR}{2}$, we get:
\als
\Delta_T\leq\frac{64LD_1^2}{KR}+16D_1\left(\sigma+4\sigmal D_1\right)\left(\frac{\sqrt{6}}{\sqrt{K}R}+\frac{3}{\sqrt{MKR}}\right)
\eals
Bounding $\sqrt{6}\leq\frac{5}{2}$ we get:
\als
\Delta_T\leq\frac{64LD_1^2}{KR}+8D_1\left(\sigma+4\sigmal D_1\right)\left(\frac{5}{\sqrt{K}R}+\frac{6}{\sqrt{MKR}}\right)
\eals
\end{proof}

\clearpage
\section{Experimental Details}
\label{app:exp}

This section provides additional details for the experiments in \cref{fig:exp}.
We evaluate the effect of the number of communication rounds $R$ on the test accuracy of Local MixVR and several standard optimization baselines.

\paragraph{Datasets.}
We conduct experiments on MNIST \citep{lecun2010mnist} and CIFAR-10 \citep{krizhevsky2014cifar}.
MNIST is a handwritten digit classification dataset consisting of grayscale $28\times28$ images from 10 classes, corresponding to the digits $0$ through $9$.
It contains 60,000 training examples and 10,000 test examples.
CIFAR-10 is an image classification dataset consisting of color $32\times32$ images from 10 object classes, with 50,000 training examples and 10,000 test examples.

\paragraph{Models.}
\begin{itemize}
\item For MNIST, we use a two-layer convolutional neural network.
The model takes a single-channel $28\times28$ image as input and applies two convolutional layers with $5\times5$ kernels: the first maps the input from $1$ channel to $20$ channels, and the second maps from $20$ channels to $50$ channels.
Each convolutional layer is followed by a ReLU activation and $2\times2$ max pooling.
The resulting feature map is flattened into an $800$-dimensional vector and passed through two fully connected layers of sizes $800 \to 50$ and $50\to10$ for the 10 MNIST classes.
\item For CIFAR-10, we use a ResNet-18 architecture \citep{he2016deep}.
\end{itemize}

\paragraph{Methods.}
We compare Local MixVR against Local SGD, Local Momentum, Minibatch SGD, and Minibatch ASGD.
For Local MixVR, Local SGD, and Local Momentum, each worker performs local updates between communication rounds, after which the worker models are synchronized.
To isolate the effect of the communication budget, all methods are evaluated over the same range of communication-round counts $R$.

\paragraph{Experimental parameters.}
The experiments use $4$ workers for MNIST and $8$ workers for CIFAR-10.
We train MNIST for $2$ epochs and CIFAR-10 for $30$ epochs.
Each worker uses a minibatch of $4$ samples for MNIST and a minibatch of $16$ samples for CIFAR-10.
For Minibatch SGD, Minibatch ASGD, and the local optimizers used by SGD and Momentum, we use the built-in PyTorch implementation of SGD.
\footnote{\url{https://docs.pytorch.org/docs/2.11/generated/torch.optim.SGD.html}}
For all momentum-based methods, we use the standard momentum coefficient $0.9$, which corresponds to $\beta=0.1$ in our notation.
For each approach, we tune the learning rate over the grid $\{0.01,0.05,0.1\}$.
For the parameter $\alpha$, we tune $\alpha$ over the grid $\{0.05,0.1,0.25,0.5,0.75\}$.
In the implementation of \alg, we set $\gamma_t=0.95$.

\end{document}